%% file: main.tex
\documentclass[accepted]{uai2026} 
                        
\usepackage[american]{babel}
\usepackage[table]{xcolor} 
\usepackage{natbib} 
\usepackage{mathtools} 

\usepackage{booktabs} 
\usepackage{tikz} 
\usepackage{defs}
\usepackage{booktabs}
\usepackage{arydshln}
\usepackage{times}  
\usepackage{helvet}  
\usepackage{courier}  
\usepackage{graphicx} 
\usepackage{caption}
\usepackage{enumitem}
\setlist{nosep}
\usepackage{multirow}
\usepackage{subcaption}

\definecolor{improvegreen}{RGB}{230, 245, 224}  
\definecolor{degradered}{RGB}{255, 230, 230}    
\definecolor{mutedgreen}{RGB}{76, 153, 0}
\definecolor{mutedyellow}{RGB}{204, 153, 0}
\definecolor{mutedred}{RGB}{204, 76, 76}

\title{Ensemble Diversity Optimization for Subjective Supervision
}
\author[1]{\href{mailto:<x.cui@mmu.ac.uk>?Subject=Your UAI 2026 paper on EDO}{Xia Cui}}
\author[2]{Ziyi Huang}
\author[1]{N. R. Abeynayake}
\affil[1]{%
    School of Computing and Mathematics, Manchester Metropolitan University, Manchester, UK.
}
\affil[2]{%
   School of Computer Science, Hubei University, Wuhan, China.
}

\begin{document}
 
\maketitle

\begin{abstract}
Subjective NLP tasks often exhibit systematic annotator disagreement, requiring models that represent uncertainty rather than collapse it. We introduce Ensemble Diversity Optimization (EDO), a prediction-space framework that jointly optimizes ensemble weights, effective cardinality, and calibration through a unified differentiable objective. EDO learns ensemble composition and size end-to-end via Gumbel–Softmax relaxation and incorporates a signed diversity regularizer, tuned on validation data, to steer optimization toward either preserving or suppressing disagreement. This regularization prevents ensemble collapse and enables controlled navigation of the utility–calibration trade-off. The framework integrates a soft F1 surrogate, class-weighted cross-entropy to address imbalance, and reliability-weighted diversity to regulate intra-ensemble variability.
Experiments on four subjective text-classification benchmarks (ArMIS, ConvAbuse, HS-Brexit, MD-Agreement) show that EDO substantially improves probabilistic calibration, reducing cross-entropy (40–78\% depending on baseline) and lowering Brier scores relative to Soft-CE, Soft-MD, Top-5 Voting, and WEL, while maintaining competitive F1 and better alignment with annotator distributions. These results demonstrate that jointly optimizing ensemble structure with a signed diversity regularizer provides an efficient, model-agnostic approach for modeling human subjectivity in supervised learning.
\end{abstract}

\input{scripts/01_introduction}

\input{scripts/02_relatedwork}
\input{scripts/03_methods}

\input{scripts/04_experiments}
\input{scripts/05_results}
\input{scripts/06_limitations}
\input{scripts/07_conclusions}
\input{scripts/E_acknowledge}

\bibliography{datainfo}

\input{scripts/supplement}

\end{document}

%% file: scripts/01_introduction.tex
\section{Introduction}

Many NLP tasks exhibit substantial and systematic annotator disagreement. In domains such as content moderation, hate speech detection, and sentiment analysis, divergent annotations arise from semantic ambiguity, contextual dependence, or variation in annotator expertise rather than annotation error~\citep{Snow:EMNLP:2008,Plank:EMNLP:2022,Uma:JAIR:2022,Cui:RANLP:2025}. These settings challenge standard supervised learning assumptions, as the target is not a single latent label but a distribution over plausible human interpretations. Nevertheless, prevailing practice aggregates annotations into a single target, discarding distributional information and inducing overfitting to dominant interpretations~\citep{Davani:TACL:2022,Liu:ACL:2022}.

\begin{figure}[tbh]
\centering
\includegraphics[width=0.45\textwidth]{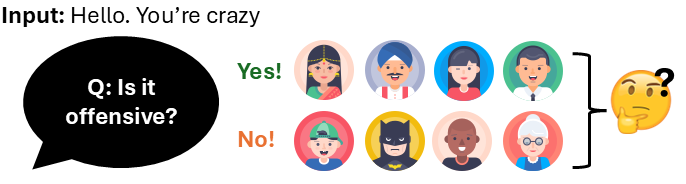}
\caption{Illustration of annotator disagreement.}
\label{fig:annotator-dis}
\end{figure}

Soft-label supervision preserves annotator label distributions and provides richer training signals~\citep{Uma:AAAI:2020,Rizzi:COLING:2024}, but continues to optimize a single predictive model. This setting aligns with partial-label learning (PLL)~\citep{Cour:JMLR:2011}, where each instance is associated with a set of candidate labels. Unlike classical PLL, which assumes a single hidden ground truth, subjective tasks often involve genuine multiplicity, where multiple labels are simultaneously valid. As a result, existing approaches provide no explicit mechanism for regulating internal predictive variability or distinguishing systematic subjectivity from annotation noise, particularly under class imbalance or heterogeneous annotator reliability.

Ensemble methods offer a principled mechanism for representing multiple plausible hypotheses and have been widely used to capture predictive uncertainty~\citep{Fort:2019:deepensembles}. However, existing approaches typically rely on fixed architectures and treat diversity as an emergent property rather than an explicit optimization objective. This contrasts with the unified theory of ensemble diversity~\citep{Wood:JMLR:2023}, which proves that for losses like cross-entropy, ensemble error decomposes exactly into bias, variance and diversity terms. This establishes diversity as a fundamental component of generalization, not an auxiliary heuristic. Despite this, practical methods lack mechanisms to optimize diversity in alignment with human subjectivity.

We propose \textbf{Ensemble Diversity Optimization (EDO)}, a prediction‑space framework for learning under subjective and imbalanced supervision. EDO introduces three key operational components: (i) a \emph{signed}, reliability-weighted diversity regularizer that either encourages disagreement (to preserve epistemic uncertainty) or suppresses it when the observed divergence is driven by structural factors such as imbalance or sparse annotator coverage; (ii) a unified multi-objective optimization procedure jointly balancing predictive utility (micro-F1), calibration (class-weighted cross-entropy), and internal diversity; and (iii) differentiable learning of ensemble structure, including cardinality, via Gumbel–Softmax relaxation~\citep{Jang:2017:gumbelsoftmax}. Crucially, the diversity coefficient and direction ($s \in \{-1, +1\}$) are treated as fixed, validation-tuned hyperparameters that steer the joint optimization trajectory, preventing ensemble collapse and enabling the reliability-weighting and size-selection components to operate synergistically. While we freeze backbone parameters in our experiments to isolate uncertainty attributable specifically to annotator disagreement and control computational cost, EDO operates entirely in prediction space and is fully model-agnostic, natively supporting end-to-end fine-tuning when task-specific adaptation is desired.

Experiments on four datasets from the LeWiDi benchmark~\citep{LeWiDi:SemEval:2023} show that EDO's joint optimization substantially improves probabilistic calibration (reducing cross-entropy by over 40\% and lowering Brier score relative to five baselines) while maintaining competitive F1 and stronger alignment with annotator distributions. These results demonstrate that jointly optimizing ensemble structure with a directional diversity regularizer provides an efficient and principled approach for modeling human subjectivity in supervised learning.

The code and data are available on GitHub\footnote{\url{https://github.com/MMUNLP/EDO}}.

%% file: scripts/02_relatedwork.tex
\section{Related Work}

\paragraph{Partial-label learning.}
Partial-label learning (PLL) considers instances annotated with a candidate set of labels, exactly one of which is assumed correct~\citep{Cour:JMLR:2011}. While EDO similarly treats annotations as sets rather than singletons, it departs from PLL's core assumption of a hidden ground truth. In subjective tasks, disagreement often reflects irreducible epistemic uncertainty, where multiple interpretations may be simultaneously valid. EDO therefore extends PLL to settings of genuine multiplicity, aiming to represent the full distribution of human judgments rather than recover a latent singleton.

\paragraph{Subjective supervision and soft labels.}
Annotator disagreement often reflects genuine interpretive variation rather than noise~\citep{Aroyo:IEEE:2015,Pavlick:NAACL:2019,Uma:JAIR:2022}, motivating methods that preserve rather than collapse this variability. Soft-label supervision retains the empirical distribution of annotator responses~\citep{Davani:TACL:2022} and improves robustness and calibration~\citep{Swayamdipta:EMNLP:2020,Uma:AAAI:2020,Zheng:EMNLP:2021,Rizzi:COLING:2024}. Existing approaches align predictions with annotator distributions via KL-based objectives, entropy regularization, or disagreement-aware metrics such as Manhattan Distance~\citep{Rizzi:COLING:2024,Tian:NPL:2024}, while entropy-based reweighting highlights ambiguous instances~\citep{Zhang:ACL:2020,Liu:TNLS:2021}. However, they still optimize a single predictive model and lack mechanisms for explicitly controlling intra-model or intra-ensemble variability.

\paragraph{Annotator modeling and expected-loss approaches.}
Annotator-specific models~\citep{Akhtar:HS-Brexit:2021,Gordon:EMNLP:2021,Xu:ICNLSP:2024} capture individual biases when annotator metadata is available, and expected-loss formulations approximate the latent distribution of annotator responses~\citep{Uma:AAAI:2020,Leonardelli:EMNLP:MD-Agreement:2021}. These methods are effective in settings with reliable annotator information. They depend on either explicit metadata or fixed aggregation schemes and generally lack mechanisms to present structured epistemic uncertainty within the predictive model.

\paragraph{Ensembles and diversity.}
Ensembles provide a natural mechanism for capturing multiple plausible interpretations and have been used to model annotator variability~\citep{Leonardelli:EMNLP:MD-Agreement:2021,Fleisig:EMNLP:2023}. However, most ensemble architectures are fixed, and diversity is treated as an emergent property rather than an optimization objective. Early work on negative correlation learning~\citep{Liu:1999:NLL} introduced diversity as an explicit regularizer to encourage specialization among ensemble members, while determinantal point processes~\citep{Macchi:1975:DPP,Launay:2020:DPP} provide a probabilistic framework for modeling subset diversity. More recent differentiable approaches include sparsely-gated mixture-of-experts~\citep{Shazeer:ICLR:2017}, which learns routing weights for expert selection, and Gumbel-based methods for $k$-subset sampling~\citep{Chaudhuri:ICLR:2019,Wijk:ICLR:2025} that enable gradient-based subset selection. 

EDO builds on these foundations but differs in three key respects: (i) it integrates diversity control directly into a joint multi-objective loss balancing utility, calibration, and disagreement; (ii) it treats the diversity coefficient and direction as validation-tuned hyperparameters that steer optimization along the utility--calibration Pareto frontier, rather than learning them end-to-end; and (iii) it operates entirely in prediction space, making it model-agnostic and computationally efficient. Weak Ensemble Learning (WEL)~\citep{Huang:WEL:2025} moves toward more principled ensemble optimization by pairing Random Select and Per-Annotator supervision with objectives for utility and calibration. Nonetheless, it lacks explicit control over pairwise disagreement, allowing diversity to collapse even in the presence of substantial annotator disagreement.

\paragraph{Theoretical grounding and multi-objective learning.}
Theoretical analyses decompose ensemble error into bias, variance and diversity components~\citep{Wood:JMLR:2023}, identifying diversity as a principled contributor to generalization. However, these results do not directly yield practical optimization strategies for subjective or imbalanced settings. Multi-objective optimization has been applied to balance competing goals such as predictive utility, calibration, and fairness~\citep{Beutel:NeurIPS:2019}, and diversity-promoting regularizers can prevent convergence to a single dominant interpretation~\citep{Li:NACCL:2016,Zhang:ACLSD:2020}. Still, these methods are typically applied to fixed ensembles and do not jointly optimize diversity, calibration, and utility within a unified differentiable framework.

\paragraph{Our contribution.}
EDO addresses these limitations by integrating stochastic annotator supervision with explicit, reliability-aware optimization of ensemble diversity. It jointly optimizes predictive utility, calibration, and internal disagreement while learning ensemble structure end-to-end via Gumbel--Softmax relaxation~\citep{Jang:2017:gumbelsoftmax}, providing a practical and flexible framework for modeling subjective uncertainty without requiring annotator metadata or hand-designed ensemble architectures. Crucially, reliability weights emerge implicitly from gradient-based balancing of calibration and utility objectives, rather than being precomputed uncertainty scores; the signed diversity regularizer then steers this joint optimization to either preserve epistemic variability ($s=-1$) or suppress structural noise ($s=+1$).

%% file: scripts/03_methods.tex
\section{Methodology}
\label{sec:method}
We propose \textbf{Ensemble Diversity Optimization} (EDO), a gradient-based framework that treats ensemble disagreement as a signal of annotator subjectivity rather than noise. EDO optimizes reliability-aware ensemble weights $\mathbf{w}$ and effective size $K$ via a signed diversity objective: it preserves disagreement reflecting genuine subjectivity while suppressing variation‑induced or imbalance‑driven divergence. Operating solely in prediction space (with frozen backbones), EDO isolates annotation-driven uncertainty from training parameters, yielding efficient, model-agnostic control over intra-ensemble variability.
EDO extends Weak Ensemble Learning (WEL)~\citep{Huang:WEL:2025} through three innovations: (i) a signed, reliability-weighted diversity objective; (ii) class-weighted calibration for imbalanced soft labels; and (iii) differentiable ensemble-structure learning, replacing WEL's derivative-free optimization.

\subsection{Problem Setup}

We consider a dataset $\mathcal{D} = \{(x_i, \{y_i^{(a)}\}_{a=1}^{A_i})\}_{i=1}^N$, where each instance $x_i$ is annotated by $A_i$ annotators. Each annotation $y_i^{(a)} \in \Delta^{C-1}$ represents a probabilistic label over $C$ classes. From these annotations, we derive two complementary supervision signals:
\begin{itemize}
    \item \textbf{Hard labels} $y_i \in \{0,\dots,C-1\}$ obtained via majority vote, used for utility-oriented objectives such as micro-F1.
    \item \textbf{Soft labels} $\bar{y}_i = \frac{1}{A_i}\sum_{a=1}^{A_i} y_i^{(a)} \in \Delta^{C-1}$, which preserve annotator disagreement for calibration.
\end{itemize}

All labels maintain strict instance-level semantics: each instance is associated with exactly one hard label and one soft-label distribution, independent of the number of annotators. 
Ensemble predictions are represented as tensors $\mathbf{P} \in \mathbb{R}^{N \times K \times C}$. 
EDO applies shape normalization to enforce a canonical $[N, K, C]$ representation, ensuring consistent optimization and gradient flow across datasets and implementations.

We construct an ensemble of $K$ pre-trained models $\{f_k\}_{k=1}^K$. The ensemble prediction is defined as a convex combination:
\begin{align}
\hat{y}(x) = \sum_{k=1}^K w_k f_k(x),
\quad w_k \geq 0, \quad \sum_{k=1}^K w_k = 1.
\end{align}
All backbone parameters remain frozen; only $\mathbf{w}$ and the effective ensemble size $K$ are learned.

\subsection{Annotator Supervision Strategies}

Following prior work, we consider two annotator supervision strategies:

\begin{description}
    \item[Random Select.] For each instance $x_i$, an annotator $a \sim \text{Uniform}(1,\dots,A_i)$ is sampled and $y_i^{(a)}$ is used as the training target. This yields an unbiased stochastic estimator of the expected loss under the empirical annotator distribution and requires no annotator metadata at inference time. Random Select constitutes the default supervision strategy in EDO.

    \item[Per-Annotator.] Each ensemble member is paired with a specific annotator $a$ and is trained only on the subset of instances $\mathcal{D}_a = \{ (x_i, y_i^{(a)}) \in \mathcal{D} \}$ for which that annotator provided a label. This preserves annotator-specific biases and provides a structured baseline for analyzing disagreement. It requires $K \leq \min_i A_i$ and access to annotator identity information, and ensemble members may be trained on subsets of different sizes when annotator coverage is incomplete.
\end{description}

Both strategies are optimized within the same joint multi-objective framework described in the following section.

\subsection{Multi-Objective Optimization}
\label{sec:multi-objective}

EDO formulates learning as a joint multi-objective optimization over ensemble weights $\mathbf{w}$ and structure:
\begin{align}
\min_{\mathbf{w}, K} 
\ 
\lambda_{\text{F1}} \mathcal{L}_{\text{F1}}
+ \lambda_{\text{CE}} \mathcal{L}_{\text{CE}}
+ \lambda_{\text{Div}} \mathcal{L}_{\text{Div}}^{(s)}
+ \lambda_{\text{Reg}} \mathcal{L}_{\text{Reg}}
\end{align}
where $\lambda_{(\cdot)} \geq 0$ control trade-offs between predictive utility, calibration, diversity and regularization.

\paragraph{Differentiable micro-F1 loss.}
To approximate the evaluation metric while preserving differentiability, we adopt a soft micro-F1 loss~\citep{Ye:ICML:2012:softF1}:
\begin{align}
\mathcal{L}_{\text{F1}} =
1 - \frac{2 \cdot \mathrm{TP}}{2 \cdot \mathrm{TP} + \mathrm{FP} + \mathrm{FN} + \epsilon}
\end{align}
where $\mathrm{TP}$, $\mathrm{FP}$ and $\mathrm{FN}$ denote soft counts derived from probabilistic predictions and $\epsilon$ ensures numerical stability.

\paragraph{Class-weighted cross-entropy.}
Calibration to soft labels under class imbalance is enforced via:
\begin{align}
\mathcal{L}_{\text{CE}} =
- \frac{1}{N} \sum_{i=1}^N \sum_{c=1}^C
\alpha_c \, y_{i,c} \log(\phi_{i,c})
\end{align}
where $\alpha_c = N / (C \cdot N_c)$ is an inverse-frequency class weight and
$\phi_{i,c} = \sum_{k=1}^K w_k \hat{y}_{i,c}^{(k)}$.
This weighting rescales loss contributions without distorting the empirical annotator distribution.

\paragraph{Reliability-weighted signed diversity loss.}
To explicitly regulate intra-ensemble predictive variability, we define a reliability-weighted pairwise disagreement term:
\begin{align}
\mathcal{L}_{\text{Div}} =
\frac{1}{N} \sum_{i=1}^N \sum_{k,l=1}^K
w_k w_l \left\lVert \hat{y}_{i}^{(k)} - \hat{y}_{i}^{(l)} \right\rVert_1 
\end{align}
EDO treats diversity as a bidirectional modelling signal via a signed objective:
\begin{align}
\mathcal{L}_{\text{Div}}^{(s)} = s \cdot \mathcal{L}_{\text{Div}}, \quad s \in \{-1,+1\},
\end{align}
with fixed $\lambda_{\text{Div}} \geq 0$.
Setting $s=-1$ encourages disagreement, preserving epistemic uncertainty arising from systematic subjectivity, while $s=+1$ suppresses disagreement when the observed divergence is shaped by structural artifacts (e.g., extreme imbalance) rather than coherent subjectivity.
No class-frequency weighting is applied, ensuring that diversity captures internal predictive variability rather than label distribution effects.
This objective is distinct from Manhattan Distance~\citep{Rizzi:COLING:2024}, which evaluates calibration of the aggregated prediction rather than internal ensemble uncertainty.

\paragraph{Regularization.}
We regularize ensemble weights via $\mathcal{L}_{\text{Reg}} = \lVert \mathbf{w} \rVert_2^2$.

\paragraph{Theoretical note.}
The signed diversity objective admits a direct interpretation in terms of predictive variance: as shown in Appendix~\ref{app:theory}, the pairwise dispersion term $L_{\text{Div}}$ is monotonically related to the ensemble’s predictive spread for fixed weights. The sign $s$ therefore selects the direction along the calibration–uncertainty Pareto frontier, allowing EDO to either preserve or reduce epistemic variability without modifying the loss weights themselves.

\subsection{Cross-Entropy Objective Variants}
\label{sec:ce-variants}

We consider three reliability-aware variants of the cross-entropy loss:
\begin{itemize}
    \item \textbf{Random Ensemble:}
    \begin{align}
    \mathcal{L}_{\text{CE}}^{\text{rand}} =
    - \frac{1}{N} \sum_{i,c}
    \alpha_c \, y_{i,c} \log\big( \hat{y}_{i,c}^{(k)} + \epsilon \big)
    \end{align}
    where $k \sim \text{Categorical}(\mathbf{w})$. This yields a stochastic estimator robust to individual annotator signal.

    \item \textbf{Mean Ensemble:}
    \begin{align}
    \mathcal{L}_{\text{CE}}^{\text{mean}} =
    - \frac{1}{N} \sum_{i,c}
    \alpha_c \, y_{i,c} \log\!\left( \sum_{k=1}^K w_k \hat{y}_{i,c}^{(k)} + \epsilon \right)
    \end{align}
    producing stable, consensus-driven calibration.

    \item \textbf{All Ensembles:}
    \begin{align}
    \mathcal{L}_{\text{CE}}^{\text{all}} =
    - \frac{1}{N} \sum_{i,c}
    \alpha_c \, y_{i,c} \sum_{k=1}^K w_k \log\big( \hat{y}_{i,c}^{(k)} + \epsilon \big)
    \end{align}
    preserving member-specific signals while emphasizing reliable models.
\end{itemize}

\subsection{Learning the Ensemble Size}

Rather than fixing ensemble size, EDO learns it end-to-end using a differentiable relaxation with upper bound $K_{\max}$.
Let $\mathcal{K} = \{K_{\min}, \dots, K_{\max}\}$ denote candidate sizes and $\boldsymbol{\pi}$ learnable logits over $\mathcal{K}$.
We apply the Gumbel-Softmax reparameterization~\citep{Jang:2017:gumbelsoftmax}:
\begin{align}
z_j = \log \pi_j + g_j, \quad g_j \sim \mathrm{Gumbel}(0,1),
\end{align}
\begin{align}
\tilde{k}_j = \frac{\exp(z_j / T_e)}{\sum_{j'} \exp(z_{j'} / T_e)}
\end{align}
where temperature $T_e$ is annealed during training.
At inference, we select:
\begin{align}
K^* = \arg\max_j \pi_j, \quad K^* \leq K_{\max}
\end{align}
This enables dataset-adaptive ensemble complexity without combinatorial search while remaining fully differentiable. The complete technical implementation details are provided in Appendix~\ref{app:gumbel}.

A comprehensive summary distinguishing learnable parameters from fixed hyperparameters is provided in Appendix~\ref{app:parameters}.

%% file: scripts/04_experiments.tex
\section{Experiment Setup}
\label{sec:experiments}
This section describes the experimental conditions under which EDO is evaluated. We outline the datasets and preprocessing pipeline, backbone training procedure, optimization and search settings, the evaluation metrics and model-selection criteria used across all experiments.

\subsection{Datasets and Preprocessing}
\label{sec:datasets}

We evaluate EDO on the four datasets from the LeWiDi 2023 shared task benchmark~\citep{LeWiDi:SemEval:2023}: ArMIS (Arabic misogyny), ConvAbuse (English dialogue abuse), HS-Brexit (English Brexit-related hate speech) and MD-Agreement (English offensiveness across topical domains). To ensure comparability with prior work, we use the preprocessed versions of all four datasets provided in the WEL repository~\citep{Huang:WEL:2025}, which include standard cleaning (removal of markup, URLs, user mentions, punctuation, digits and non-ASCII characters) and the official train/dev/test splits.

For ConvAbuse, we follow prior work in mapping the 5-point abuse scale to a binary label ($<0$ = offensive) and flatten each multi-turn conversation into a single sequence before encoding. Table~\ref{tab:data-split} summarizes class distributions, split sizes and annotator coverage.

\begin{table}[ht]
\centering
\caption{Dataset statistics. Neg:Pos reports majority-to-minority ratios. \#Ann denotes the minimum and maximum annotators per instance, with $(n)$ indicating total annotators per dataset.}
\label{tab:data-split}
\resizebox{\columnwidth}{!}{%
\begin{tabular}{@{}lccccc@{}}
\toprule
\textbf{Dataset} & \textbf{Lang} & \textbf{Genre} & \textbf{\#Train/Dev/Test} & \textbf{Neg:Pos} & \textbf{\#Ann} \\
\midrule
ArMIS         & Arabic  & Posts    & 657/141/145      & $1.43{:}1$ & 3 (3) \\
ConvAbuse     & English & Dialogue & 2398/812/840     & $5.16{:}1$ & 2--7 (8) \\
HS-Brexit     & English & Posts    & 784/168/168      & $9.89{:}1$ & 6 (6) \\
MD-Agreement  & English & Posts    & 6592/1104/3057   & $2.36{:}1$ & 5 (670) \\
\bottomrule
\end{tabular}}
\end{table}

\subsection{Backbone Models}
\label{sec:backbones}

We fine-tune \texttt{BERT-base}~\citep{Devlin:BERT:2018} for English datasets and \texttt{AraBERTv2}~\citep{Antoun:2020:ELRA:arabert} for ArMIS, using AdamW (learning rate $2\times10^{-5}$, 500 warmup steps, weight decay in $[0.005, 0.015]$), batch sizes of 16/32 (train/eval), and input sequences truncated to 512 tokens. Training runs for up to 20 epochs with early stopping (patience = 3, $\Delta$F1 = 0.01). After fine-tuning, all backbone parameters are frozen, only ensemble weights and size are learned during EDO optimization. Although we use these models for direct comparability with prior work, EDO is fully model-agnostic and compatible with any set of pre-trained probabilistic classifiers. Additional hyperparameter details are provided in Appendix~\ref{app:backbone-hyperparameters}.



\subsection{EDO Training and Hyperparameters}
All supervision strategies and $\cL_\text{CE}$ aggregation mechanisms used in our experiments follow the formulations given in Section~\ref{sec:method}, and we refer the reader there for full definitions.
Hyperparameters in EDO are tuned on development sets using \textsc{Optuna}~\citep{Akiba:SIGKDD:2019:optuna} with 50 trials per configuration, each run for 10 epochs. The search space comprises loss weights $\lambda_{\text{F1}}, \lambda_{\text{CE}}, \lambda_{\text{Div}} \in [0,1]$, diversity sign $s \in \{-1,+1\}$, $\ell_2$ regularization $\lambda_{\text{Reg}} \in [10^{-5},10^{-2}]$ (log scale), learning rate $\eta \in [10^{-5},10^{-3}]$ (log scale), initial Gumbel temperature $T_0 \in [0.1,1.0]$, temperature decay $\gamma \in [0.01,0.2]$, and ensemble size sampled using the \textit{Random Select} variant with an upper bound of $K_{\max}=10$.

\subsection{Evaluation Metrics}
\label{sec:evaluation}

Evaluation follows standard practice for subjective supervision~\citep{Uma:AAAI:2020,LeWiDi:SemEval:2023,Rizzi:COLING:2024}. We report:
(i) Micro-F1 (F1): task utility from hard predictions; (ii) Cross-Entropy (CE): negative log-likelihood of soft target distributions; and (iii) Manhattan Distance (MD): $\ell_1$ distance between predicted probabilities and empirical annotator distributions. 
For benchmark comparison, we additionally report the Soft Brier Score (BS)~\citep{Flores:AISTATS:2026:bs}, a stricter scoring rule sensitive to overconfident miscalibration. 
For clarity, we include only a brief description of each evaluation metric here, the complete mathematical definitions are provided in Appendix~\ref{app:evaluation-metrics}.


\subsection{Model Selection}
\label{sec:model-selection}
As EDO jointly optimizes competing objectives, we follow Pareto-based model selection via NSGA-II~\citep{Deb:NSGA2:2002}. A configuration is Pareto-optimal if no other configuration improves all three validation metrics (higher F1, lower CE and MD).  
For reporting single values, we select the Pareto-optimal configuration with the lowest CE, while visualizing full trade-offs in Section~\ref{sec:results}.

%% file: scripts/05_results.tex
\input{tables/ablation_tables}

\input{tables/heatmap_results}

\section{Results}
\label{sec:results}

We evaluate EDO through ablations, structural analyses and comparisons with established baselines. The goal is to understand how signed disagreement control ($s$), cross-entropy aggregation, reliability-aware weighting and learned ensemble structure jointly influence utility (F1), calibration (CE) and alignment with annotator distributions (MD).

\subsection{Effect of Signed Diversity and Cross-Entropy Objective Aggregation}
\label{sec:ablation}

All ablations use the \textit{Random Select} supervision strategy. We examine:
(i) the three aggregation variants 
$\mathcal{L}_{\text{CE}}^{\text{rand}}$, 
$\mathcal{L}_{\text{CE}}^{\text{mean}}$, and 
$\mathcal{L}_{\text{CE}}^{\text{all}}$, and
(ii) the diversity sign $s \in \{-1,+1\}$, corresponding to 
disagreement-preserving ($s=-1$) and disagreement-suppressing ($s=+1$) regimes.  
Loss weights are fixed to 
$\lambda_{\text{F1}}, \lambda_{\text{CE}}, \lambda_{\text{Div}}=1$ and 
$\lambda_{\text{Reg}}=10^{-3}$, with identical optimization hyperparameters 
($\eta=10^{-3}$, $T_0=0.5$, $\gamma=0.05$).

\paragraph{Aggregation variants under signed diversity.}
Table~\ref{tab:signed-diversity-ce-variant} reports test F1, CE, and $\Delta$CE for all variants under both disagreement regimes. Full metrics, including MD and $\Delta$F1/$\Delta$MD, are provided in Appendix~\ref{app:generalisation}.  
Under moderate imbalance (ArMIS, MD-Agreement), 
$\mathcal{L}_{\text{CE}}^{\text{mean}}$ consistently provides the most stable calibration, achieving the lowest or near-lowest CE and small generalization gaps. Sensitivity to $s$ is limited: MD-Agreement shows near-identical performance across signs, while on ArMIS disagreement preservation ($s=-1$) slightly benefits $\mathcal{L}_{\text{CE}}^{\text{all}}$, whereas disagreement suppression ($s=+1$) favors $\mathcal{L}_{\text{CE}}^{\text{mean}}$.  
ConvAbuse shows weak dependence on $s$, consistent with discourse-level subjectivity where disagreement reflects interpretation rather than structural artifacts.  
In contrast, under severe imbalance (HS-Brexit), the effect of $s$ becomes critical: disagreement preservation ($s=-1$) benefits $\mathcal{L}_{\text{CE}}^{\text{rand}}$, improving both F1 and CE, while disagreement suppression ($s=+1$) substantially reduces CE for $\mathcal{L}_{\text{CE}}^{\text{mean}}$ and $\mathcal{L}_{\text{CE}}^{\text{all}}$, though sometimes with larger $\Delta$CE.

\paragraph{Signed diversity as complementary regularization.}
Table~\ref{tab:diversity-ablation} reports the marginal contribution of 
$\mathcal{L}_{\text{Div}}^{(s)}$ across datasets using 
$\mathcal{L}_{\text{CE}}^{\text{all}}$.  
Although diversity yields only modest gains under single-objective losses ($\Delta$F1 = +0.0093 for $s=+1$ and +0.0062 for $s=-1$ under $\mathcal{L}_{\text{F1}}$; +0.0044 for $s=+1$ under $\mathcal{L}_{\text{CE}}$), its impact becomes more evident when combined with the joint objective 
$\mathcal{L}_{\text{F1}} + \mathcal{L}_{\text{CE}}$.  
Notably, disagreement preservation ($s=-1$) with the joint objective yields the most substantial improvement in calibration ($\Delta$CE = -0.0993), alongside modest F1 gains ($\Delta$F1 = +0.0043), demonstrating that diversity operates most effectively as a complementary mechanism that reshapes the utility–calibration trade-off rather than as a standalone objective. Agreement promotion ($s=+1$) with the joint objective, by contrast, yields more modest changes ($\Delta$F1 = +0.0017, $\Delta$CE = +0.0107) while reducing MD ($\Delta$MD = -0.0093). The per-dataset contribution can be found in Appendix~\ref{app:diversity-ablation}.

\paragraph{Interpretation.}
These results indicate a non-trivial interaction between aggregation strategy and disagreement control. It is consistent with our formulation, and disagreement preservation (s = -1) helps when variation reflects subjectivity.


\input{tables/ensemble_weights}


\subsection{Effect of Reliability‑Aware Ensemble Weighting}

Table~\ref{tab:effect-ensemble-weights} isolates the impact of reliability-aware ensemble weighting by comparing learned, reliability-weighted aggregation against uniform averaging (unweighted). Results correspond to Pareto-optimal configurations selected by minimum CE among non-dominated solutions (Section~\ref{sec:model-selection}). Across all benchmarks, learned weights consistently reduce CE (negative $\Delta$CE), with substantial gains in several settings (e.g., $\Delta$CE = $-0.5607$ for EDO-PerAnn on MD-Agreement). This indicates that down-weighting uncertain or noisy annotators directly improves probabilistic calibration without sacrificing predictive alignment. Calibration improvements are accompanied by reduced MD in multiple cases, particularly for EDO-PerAnn, suggesting that reliability weighting complements annotator-specific modeling by filtering structural noise.

Effects on F1 are modest: EDO-Random shows small decreases in some settings, whereas EDO-PerAnn benefits on MD-Agreement (+0.0350). Overall, adaptive weighting primarily enhances uncertainty quality with minimal impact on point-estimate utility.

It is important to note that Table~\ref{tab:effect-ensemble-weights} reports the \textit{marginal} effect of reliability weighting in isolation. As detailed in the per-dataset ablation (Appendix~\ref{app:diversity-ablation}), the full performance gains of EDO arise from \textit{synergistic joint optimization} of all components. While weighting improves calibration independently, the substantial CE reductions in Table~\ref{tab:benchmark-results} emerge when weighting is combined with differentiable ensemble-size selection, class-weighted cross-entropy for imbalance, and the signed diversity regularizer. In this joint regime, diversity does not act as a standalone utility driver but rather as a directional regularizer that prevents ensemble collapse and preserves meaningful epistemic variation. By controlling intra-ensemble disagreement, diversity creates a stable optimization landscape that allows reliability weighting and adaptive $K$ to operate effectively, ultimately navigating the utility-calibration Pareto frontier.


\subsection{Sensitivity of Objective Weights and Structural Parameters}
\label{sec:sensitivity}
We next examine how optimization hyperparameters and learned structure affect model behavior under joint multi-objective optimization. While Spearman correlations~\citep{Kendall:Spearman:1969,Zwillinger:SpearmanCorr:1999} capture marginal sensitivities, they reveal how EDO navigates the utility-calibration Pareto frontier when competing gradients are balanced simultaneously. Figure~\ref{fig:corr-heatmap} presents correlations under disagreement preservation ($s=-1$, \ref{fig:corr-neg}) and suppression ($s=+1$, \ref{fig:corr-pos}).

\paragraph{Joint steering via objective weights.}
The utility weight $\lambda_{\text{F1}}$ correlates positively with F1 and negatively with CE and BS across most datasets and both diversity signs, indicating that emphasizing utility shifts the optimizer toward higher-performing regions without systematically degrading calibration. Conversely, $\lambda_{\text{CE}}$ exhibits weak and inconsistent correlations with CE. This reflects EDO's synergistic gradient balancing: calibration arises from the interaction of $\lambda_{\text{CE}}$ with diversity regularization and reliability weighting, rather than being driven by $\lambda_{\text{CE}}$ in isolation. The regularization weight $\lambda_{\text{Reg}}$ shows consistent negative correlations with CE, suggesting that $\ell_2$ stabilization prevents weight collapse and supports calibration robustness.

\paragraph{Diversity as a directional regularizer.}
The diversity weight $\lambda_{\text{Div}}$ displays sign-dependent behavior, suggesting its role as a steering mechanism. Under disagreement preservation ($s=-1$), $\lambda_{\text{Div}}$ correlates positively with CE, empirically quantifying the utility-calibration trade-off. Flipping the sign to $s=+1$ dampens these correlations, demonstrating that $\lambda_{\text{Div}}$ modulates the joint gradient flow along the calibration axis.

\paragraph{Structural adaptation and optimization dynamics.}
Unlike fixed hyperparameters, the ensemble size $K$ is a learned structural variable responding directly to joint optimization pressure. $K$ correlates strongly and negatively with CE and BS on all datasets, indicating that the optimizer allocates additional members precisely when doing so improves calibration. Positive correlations with F1 show that complexity is retained only when it benefits the joint objective. Optimization parameters $T_0$ and $\gamma$ exhibit dataset-dependent sensitivities, reflecting how discrete exploration interacts with annotator structure, though their effects remain secondary to the dominant signal from $K$.

Collectively, these correlations illustrate that EDO's components function as coordinated controls within a shared optimization landscape. Weak marginal effects of $\lambda_{\text{CE}}$, sign-modulated $\lambda_{\text{Div}}$, and robust $K$ adaptation suggest that utility, calibration, and diversity gradients interact synergistically to handle dataset-specific annotation regimes.

\input{tables/benchmark_results}

\subsection{Comparison with Baselines}
\label{sec:benchmark}
To ensure comparability, we use the Soft‑CE, Top‑5 Voting and WEL results reported in the WEL paper and additionally include the Soft-MD baseline and BS. The four baselines are: (i) single‑model soft‑label CE (Soft‑CE)~\citep{Uma:AAAI:2020}, (ii) single‑model soft‑label MD (Soft‑MD)~\citep{Rizzi:COLING:2024}, (iii) majority‑voting ensemble (Top‑5 Voting)~\citep{Xu:ICNLSP:2024}, and (iv) WEL~\citep{Huang:WEL:2025}. Table~\ref{tab:benchmark-results} reports test performance.

EDO‑Random obtains the lowest CE on all datasets, reducing CE by up to 78\% relative to Soft‑CE (ConvAbuse: 0.2149 vs.\ 0.9671) and 62\% relative to WEL (ConvAbuse: 0.2149 vs.\ 0.5577), indicating superior calibration to annotator distributions. These gains are corroborated by the Soft Brier Score: EDO‑Random achieves the lowest BS on all four benchmarks (e.g., 0.0640 vs.\ 0.0699 for WEL on ConvAbuse; 0.1086 vs.\ 0.1606 on MD‑Agreement), suggesting that improved calibration is robust across metrics. Across datasets, BS and CE reductions are strongly correlated ($\rho = 0.94$), suggesting both metrics capture complementary aspects of probabilistic alignment with annotator uncertainty. Although WEL reaches the highest F1 scores (consistent with its utility‑focused objective), it consistently shows higher CE and BS than EDO‑Random, suggesting the trade‑off between utility and calibration.

EDO-PerAnn achieves the lowest MD on ConvAbuse and HS-Brexit, indicating benefits when annotator identities correspond to meaningful and persistent perspectives. In these datasets, each annotator contributes enough labels to encode stable interpretive tendencies, and pairing ensemble members with specific annotators allows EDO to capture these structured differences. In contrast, datasets with sparse or highly constrained annotator pools, such as MD-Agreement (670 annotators with heterogeneous coverage) and ArMIS (only 3 total annotators), show diminished returns from Per-Annotator supervision. In these settings, fixed annotator assignments either fragment the data into weakly informative subsets (MD-Agreement) or restrict ensemble diversity due to the minimal member pool (ArMIS), resulting in poorer calibration (e.g., CE = 0.6508 vs.\ 0.5094 and BS = 0.1592 vs.\ 0.1086 on MD-Agreement; CE = 0.6562 vs.\ 0.5719 and BS = 0.2626 vs.\ 0.2183 on ArMIS). EDO-Random, which draws repeatedly from the pooled annotator distribution, is therefore more robust when individual annotators do not supply coherent perspective-specific signals or when the fixed pool is too small to sustain meaningful diversity.

\paragraph{Stability.}
Table~\ref{tab:variant-results-std} reports standard deviations over five runs. EDO-Random exhibits low variance in F1, CE, MD, and BS on most datasets (all $\sigma < 0.025$), indicating stable predictive and distributional behavior. Higher MD variance on HS-Brexit reflects the extreme class imbalance that affects all evaluated methods. EDO-PerAnn shows slightly elevated CE variability on ArMIS ($\sigma = 0.0512$), which is expected given its small and fixed annotator pool, but remains stable elsewhere. Low BS variance for both variants (all $\sigma \leq 0.0135$) suggests consistent calibration across random seeds, reinforcing that EDO's joint optimization produces reliable uncertainty estimates.

%% file: tables/ablation_tables.tex
\begin{table}[htbp]
\centering
\caption{Test performance of cross-entropy aggregation variants 
($\mathcal{L}_{\text{CE}}^{\text{rand}}$, 
$\mathcal{L}_{\text{CE}}^{\text{mean}}$, 
$\mathcal{L}_{\text{CE}}^{\text{all}}$) 
under disagreement-preserving ($s=-1$) and disagreement-suppressing ($s=+1$) regimes.
Metrics include F1 ($\uparrow$), CE ($\downarrow$), and generalization gap $\Delta$CE = test - dev ($\downarrow$). Full results including Manhattan Distance (MD) are reported in Appendix~\ref{app:generalisation}.}

\label{tab:signed-diversity-ce-variant}
\resizebox{0.5\textwidth}{!}{%
\begin{tabular}{@{}llcccc:cc@{}}
\toprule
\multirow{2}{*}{Dataset} & \multirow{2}{*}{Variant} &
\multicolumn{2}{c}{Test F1} &
\multicolumn{2}{c}{Test CE} &
\multicolumn{2}{:c}{$\Delta$CE} \\
\cmidrule(lr){3-4} \cmidrule(lr){5-6} \cmidrule(lr){7-8}
& & $s=-1$ & $s=+1$ & $s=-1$ & $s=+1$ & $s=-1$ & $s=+1$ \\
\midrule

\multirow{3}{*}{ArMIS}
& Random & 0.7655 & 0.7034 & 0.6097 & 0.6796 & -0.0347 & +0.0347 \\
& Mean   & 0.7655 & \textbf{0.7310} & 0.6054 & \textbf{0.5992} & -0.0655 & -0.0322 \\
& All    & \textbf{0.7724} & 0.7103 & \textbf{0.5724} & 0.6007 & -0.0614 & -0.0304 \\
\hdashline

\multirow{3}{*}{ConvAbuse}
& Random & 0.9202 & \textbf{0.9321} & \textbf{0.2409} & 0.2483 & +0.0209 & +0.0374 \\
& Mean   & 0.9214 & 0.9298 & 0.2482 & 0.2415 & +0.0345 & +0.0309 \\
& All    & \textbf{0.9286} & 0.9298 & 0.2943 & \textbf{0.2308} & +0.0687 & +0.0186 \\
\hdashline

\multirow{3}{*}{HS-Brexit}
& Random & \textbf{0.9107} & 0.8929 & \textbf{0.3465} & 0.4490 & +0.0177 & +0.1131 \\
& Mean   & 0.8988 & \textbf{0.9107} & 0.6617 & 0.3736 & +0.3198 & +0.0317 \\
& All    & 0.8988 & 0.8988 & 0.4601 & \textbf{0.3494} & +0.1074 & +0.0240 \\
\hdashline

\multirow{3}{*}{MD-Agreement}
& Random & 0.8106 & 0.7998 & 0.5182 & 0.5325 & +0.0064 & +0.0196 \\
& Mean   & \textbf{0.8178} & \textbf{0.8132} & \textbf{0.5163} & \textbf{0.5167} & +0.0040 & +0.0081 \\
& All    & 0.8054 & 0.7880 & 0.5244 & 0.5278 & +0.0100 & +0.0110 \\
\bottomrule
\end{tabular}%
}
\end{table}

\begin{table}[htbp]
\centering
\caption{
Mean marginal effect of signed diversity regularization 
$\mathcal{L}_{\text{Div}}^{(s)}$ across datasets using 
$\mathcal{L}_{\text{CE}}^{\text{all}}$.
Values denote dataset-averaged 
$\Delta = \text{with } \mathcal{L}_{\text{Div}}^{(s)} - \text{without } \mathcal{L}_{\text{Div}}^{(s)}$.
Positive $\Delta$F1 and negative $\Delta$CE/$\Delta$MD indicate improvement.}
\label{tab:diversity-ablation}
\small
\resizebox{0.5\textwidth}{!}{%
\begin{tabular}{@{} l *{3}{c c} @{}}
\toprule
\multirow{2}{*}{Base Objective} & \multicolumn{2}{c}{$\Delta$F1} & \multicolumn{2}{c}{$\Delta$CE} & \multicolumn{2}{c}{$\Delta$MD} \\
\cmidrule(lr){2-3} \cmidrule(lr){4-5} \cmidrule(lr){6-7}
& $s=+1$ & $s=-1$ & $s=+1$ & $s=-1$ & $s=+1$ & $s=-1$ \\
\midrule
$\mathcal{L}_{\text{F1}}$ & \textbf{+0.0093} & \textbf{+0.0062} & +0.0288 & +0.0202 & \textbf{-0.0618} & \textbf{+0.0044} \\
$\mathcal{L}_{\text{CE}}$ & +0.0044 & -0.0016 & \textbf{-0.0356} & -0.0192 & +0.0100 & +0.0235 \\
$\mathcal{L}_{\text{F1}}+\mathcal{L}_{\text{CE}}$ & +0.0017 & +0.0043 & +0.0107 & \textbf{-0.0993} & -0.0093 & +0.0100 \\
\bottomrule
\end{tabular}
}
\end{table}

%% file: tables/heatmap_results.tex

\begin{figure*}[t]
\centering
\begin{subfigure}[b]{0.95\textwidth}
\centering
\includegraphics[width=\textwidth]{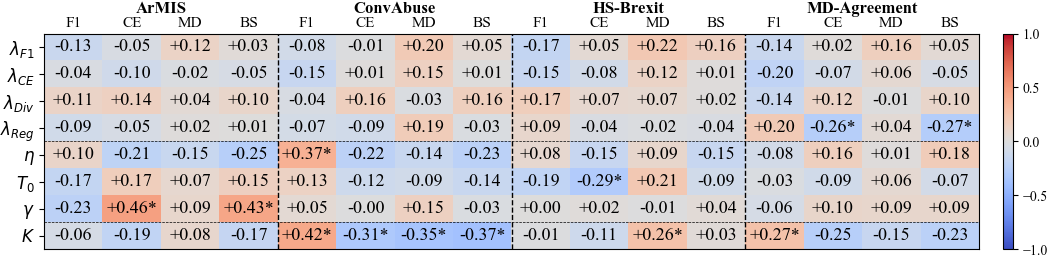}
\caption{Disagreement preservation ($s=-1$)}
\label{fig:corr-neg}
\end{subfigure}

\medskip

\begin{subfigure}[b]{0.95\textwidth}
\centering
\includegraphics[width=\textwidth]{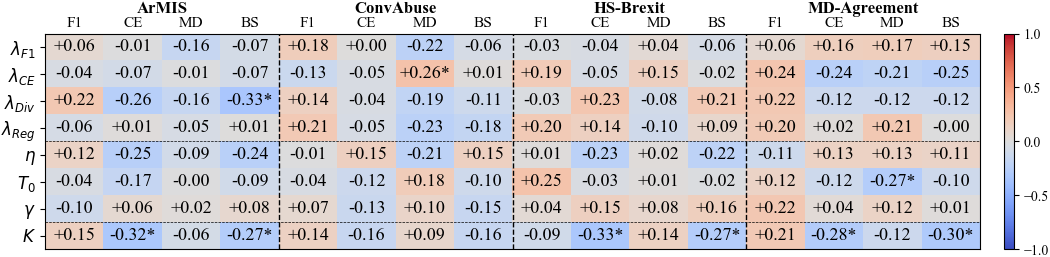}
\caption{Disagreement suppression ($s=+1$)}
\label{fig:corr-pos}
\end{subfigure}
\caption{Spearman rank correlations between hyperparameters and development-set metrics (F1, CE, MD, BS). Positive correlations with F1 indicate improved utility; negative correlations with CE/MD/BS indicate improved calibration and reduced divergence from annotator distributions. Statistically significant correlations ($p<0.05$) marked with $^{*}$.}
\label{fig:corr-heatmap}
\end{figure*}

%% file: tables/ensemble_weights.tex
\begin{table}[htbp]
\centering
\caption{Effect of reliability-aware ensemble weighting (weighted) compared to uniform averaging (unweighted).
Results correspond to Pareto-optimal configurations selected by minimum CE among non-dominated solutions.
$\Delta$ = weighted - unweighted (\colorbox{improvegreen}{Green} indicates improvement).}
\label{tab:effect-ensemble-weights}
\resizebox{0.48\textwidth}{!}{%
\begin{tabular}{lcccccc}
\toprule
Variant & \multicolumn{3}{c}{EDO-Random} & \multicolumn{3}{c}{EDO-PerAnn} \\
\cmidrule(lr){2-4} \cmidrule(lr){5-7}
Dataset & $\Delta$F1 & $\Delta$CE & $\Delta$MD & $\Delta$F1 & $\Delta$CE & $\Delta$MD \\
\midrule
ArMIS & 
\cellcolor{degradered}-0.0069 & 
\cellcolor{improvegreen}-0.0095 & 
\cellcolor{degradered}+0.0132 & 
\cellcolor{degradered}-0.0069 & 
\cellcolor{improvegreen}-0.0005 & 
\cellcolor{degradered}+0.0023 \\
ConvAbuse & 
\cellcolor{degradered}-0.0036 & 
\cellcolor{improvegreen}-0.0099 & 
\cellcolor{improvegreen}-0.0028 & 
\cellcolor{improvegreen}+0.0108 & 
\cellcolor{improvegreen}-0.0663 & 
\cellcolor{degradered}+0.0011 \\
HS-Brexit & 
\cellcolor{degradered}-0.0119 & 
\cellcolor{improvegreen}-0.0530 & 
\cellcolor{degradered}+0.0409 & 
0.0000 & 
\cellcolor{improvegreen}-0.0029 & 
\cellcolor{improvegreen}-0.0079 \\
MD-Agreement & 
\cellcolor{improvegreen}+0.0036 & 
\cellcolor{improvegreen}-0.2756 & 
\cellcolor{improvegreen}-0.0026 & 
\cellcolor{improvegreen}+0.0350 & 
\cellcolor{improvegreen}-0.5607 & 
\cellcolor{improvegreen}-0.0906 \\
\bottomrule
\end{tabular}%
}
\end{table}

%% file: tables/benchmark_results.tex
\begin{table*}[t]
\centering
\caption{Comparison with Soft-CE~\citep{Uma:AAAI:2020}, 
Soft-MD~\citep{Rizzi:COLING:2024}, 
Top-5 Voting~\citep{Xu:ICNLSP:2024}, 
and WEL (Random)~\citep{Huang:WEL:2025}.
Metrics include F1 ($\uparrow$), CE ($\downarrow$), MD ($\downarrow$) and BS ($\downarrow$).}
\label{tab:benchmark-results}
\resizebox{0.99\textwidth}{!}{%
\begin{tabular}{@{}l:cccc:cccc:cccc:cccc@{}}
\toprule
Dataset & \multicolumn{4}{c}{ArMIS} & \multicolumn{4}{c}{ConvAbuse} & \multicolumn{4}{c}{HS-Brexit} & \multicolumn{4}{c}{MD-Agreement} \\
\cmidrule(lr){2-5} \cmidrule(lr){6-9} \cmidrule(lr){10-13} \cmidrule(lr){14-17}
Metric & F1 & CE & MD & BS & F1 & CE & MD & BS & F1 & CE & MD & BS & F1 & CE & MD & BS \\
\midrule
Soft-CE & 0.6596 & 0.8039 & 0.7144 & 0.4177 & 0.8362 & 0.9671 & 4.8068 & 0.3408 & 0.7917 & 0.7652 & 0.7985 & 0.2545 & 0.7880 & 0.9948 & 1.7574 & 0.3693 \\
Soft-MD & 0.6159 & 0.6594 & 0.6755 & 0.2996 & 0.8893 & 0.6233 & 0.2198 & 0.1483 & 0.8929 & 0.6772 & 0.2640 & 0.1368 & 0.6804 & 0.7113 & 0.6146 & 0.3501 \\
Top-5 Voting & 0.7310 & 0.6529 & 0.5498 & 0.2554 & 0.9310 & 0.5651 & 0.1648 & 0.0677 & 0.8929 & 0.6154 & 0.2394 & 0.1030 & 0.7808 & 0.6629 & 0.3995 & 0.1641 \\
WEL & \textbf{0.7793} & 0.6385 & \textbf{0.5028} & 0.2653 & \textbf{0.9405} & 0.5577 & 0.1709 & 0.0699 & \textbf{0.9167} & 0.5889 & 0.2585 & 0.0861 & \textbf{0.8214} & 0.6245 & \textbf{0.3632} & 0.1606 \\ \hdashline
EDO\footnotesize{-Random} & 0.7572 & \textbf{0.5719} & 0.5370 & \textbf{0.2183} & 0.9331 & \textbf{0.2149} & 0.1850 & \textbf{0.0640} & 0.8905 & \textbf{0.3441} & 0.2714 & \textbf{0.0803} & 0.8152 & \textbf{0.5094} & 0.3649 & \textbf{0.1086} \\
EDO\footnotesize{-PerAnn} & 0.7366 & 0.6562 & 0.5594 & 0.2626 & 0.9295 & 0.2993 & \textbf{0.1553} & 0.0725 & 0.8929 & 0.4412 & \textbf{0.2367} & 0.0933 & 0.7894 & 0.6508 & 0.4030 & 0.1592 \\
\bottomrule

\end{tabular}%
}
\end{table*}

\begin{table*}[bht]
\centering
\caption{Mean standard deviation over 5 random seeds. Color coding: \textcolor{mutedgreen}{low} ($\sigma$<0.02), \textcolor{mutedyellow}{medium} (0.02$\leq$$\sigma$$<$0.05), \textcolor{mutedred}{high} ($\sigma$$\geq$0.05).}
\label{tab:variant-results-std}
\resizebox{0.99\textwidth}{!}{%
\begin{tabular}{@{}l:cccc:cccc:cccc:cccc@{}}
\toprule
Dataset & \multicolumn{4}{c:}{ArMIS} & \multicolumn{4}{c:}{ConvAbuse} & \multicolumn{4}{c:}{HS-Brexit} & \multicolumn{4}{c}{MD-Agreement} \\ 
\cmidrule(lr){2-5} \cmidrule(lr){6-9} \cmidrule(lr){10-13} \cmidrule(lr){14-17}
Variant & F1 std & CE std & MD std & BS std & F1 std & CE std & MD std & BS std & F1 std & CE std & MD std & BS std & F1 std & CE std & MD std & BS std \\ \midrule

EDO\footnotesize{-Random}
& \textcolor{mutedgreen}{0.0133}
& \textcolor{mutedgreen}{0.0031}
& \textcolor{mutedyellow}{0.0234}
& \textcolor{mutedgreen}{0.0025}
& \textcolor{mutedgreen}{0.0023}
& \textcolor{mutedgreen}{0.0005}
& \textcolor{mutedgreen}{0.0062}
& \textcolor{mutedgreen}{0.0004}
& \textcolor{mutedgreen}{0.0068}
& \textcolor{mutedgreen}{0.0045}
& \textcolor{mutedyellow}{0.0249}
& \textcolor{mutedgreen}{0.0025}
& \textcolor{mutedgreen}{0.0017}
& \textcolor{mutedgreen}{0.0005}
& \textcolor{mutedgreen}{0.0036}
& \textcolor{mutedgreen}{0.0003}
\\

EDO\footnotesize{-PerAnn}
& \textcolor{mutedgreen}{0.0031}
& \textcolor{mutedred}{0.0512}
& \textcolor{mutedyellow}{0.0221}
& \textcolor{mutedgreen}{0.0135}
& \textcolor{mutedgreen}{0.0018}
& \textcolor{mutedyellow}{0.0262}
& \textcolor{mutedgreen}{0.0040}
& \textcolor{mutedgreen}{0.0021}
& \textcolor{mutedgreen}{0.0000}
& \textcolor{mutedgreen}{0.0098}
& \textcolor{mutedgreen}{0.0006}
& \textcolor{mutedgreen}{0.0009}
& \textcolor{mutedgreen}{0.0026}
& \textcolor{mutedgreen}{0.0109}
& \textcolor{mutedgreen}{0.0040}
& \textcolor{mutedgreen}{0.0036}
\\
\bottomrule
\end{tabular}%
}
\end{table*}

%% file: scripts/06_limitations.tex
\section{Limitations and Future Work}
\label{sec:limitations}
While EDO provides a flexible and empirically effective framework for modeling annotator disagreement, several factors shape its behavior and motivate future research.

\paragraph{Dependence on the structure of disagreement.}
The effect of the signed diversity objective is inherently dataset-dependent: preserving disagreement is most effective when variation reflects genuine subjective interpretations, whereas reducing divergence is preferable when the observed variation is structurally induced (e.g., extreme imbalance). Although EDO adapts to these regimes, it does not distinguish principled subjective variation from artifact-driven divergence caused by sparse coverage or inconsistent annotator behavior. This limits robustness under distribution shift or in settings that mix coherent and unreliable annotation signals.

\paragraph{Interaction with multi-objective optimization.}
EDO jointly optimizes utility, calibration and internal diversity, and the interaction among these objectives can vary across datasets. Diversity primarily reshapes the calibration–utility balance rather than acting as a standalone objective, and hyperparameters such as $\lambda_{\text{Div}}$ and $T_{0}$ can influence this balance in dataset-specific ways. Pareto-based model selection mitigates this sensitivity, but future work may explore dynamic weighting or gradient-normalization strategies that adaptively stabilize objective scales during training.

\paragraph{Architectural and supervision constraints.}
The current implementation operates in prediction space with homogeneous frozen backbones, which provides efficiency and isolates annotator-driven uncertainty but limits representational flexibility. 
Annotator-specific supervision is effective when annotator identities reflect stable perspectives, but less reliable under sparse or heterogeneous annotator pools. 
Incorporating lightweight annotator representations or heterogeneous experts may increase expressiveness without sacrificing efficiency.

\paragraph{Future directions.}
Promising extensions include integrating adapters or partially trainable components to enrich the ensemble's hypothesis space; leveraging annotator metadata when available to inform source-aware diversity control; and developing mechanisms to infer the structure of disagreement, for example, through entropy analysis, clustering or latent-variable models. These directions may further improve robustness and adaptability across diverse subjective supervision regimes.

\paragraph{Practical implications.}
Because EDO optimizes only prediction-space components and leaves backbones fixed, it is suitable for resource-constrained pipelines and frozen-LM deployments. The framework offers a lightweight means of improving calibration and uncertainty representation while remaining compatible with existing model infrastructures. Additional runtime and memory details are provided in Appendix~\ref{app:efficiency}.

Collectively, these limitations highlight opportunities for extending EDO toward richer representations of annotator behavior and more adaptive multi-objective coordination.

%% file: scripts/07_conclusions.tex
\section{Conclusions}
\label{sec:conclusion}
We presented Ensemble Diversity Optimization (EDO), a prediction-space framework that uses signed diversity, adaptive weighting and learned ensemble structure to model annotator disagreement in subjective supervision. 
EDO adjusts internal disagreement to the structure of annotator variation, preserving subjective signal when present and reducing artifact-driven divergence.
Across four benchmarks, EDO delivers substantial gains in calibration and alignment with annotator distributions while maintaining competitive utility. These results suggest that annotator disagreement carries a structured signal rather than noise, and that EDO offers an efficient and principled foundation for uncertainty-aware learning in settings with inherently variable human judgements.

%% file: scripts/E_acknowledge.tex

\subsubsection*{Acknowledgements}
The authors would like to thank the anonymous reviewers for their valuable comments and constructive feedback, which have greatly improved the quality of this paper. 

%% file: scripts/supplement.tex
\newpage

\onecolumn


\appendix

\section{Theoretical Note: Signed Diversity and Predictive Dispersion}
\label{app:theory}

The signed diversity objective admits a direct interpretation in terms of ensemble predictive dispersion and its role within multi-objective optimization.

\begin{theorem}[Signed diversity and predictive dispersion]
Let $\{\hat{y}^{(k)}(x)\}_{k=1}^K \subset \Delta^{C-1}$ denote ensemble member predictions for input $x$, and let
\begin{align}
\hat{y}(x)=\sum_{k=1}^K w_k \hat{y}^{(k)}(x), 
\quad 
w_k \ge 0, 
\quad 
\sum_{k=1}^K w_k = 1
\end{align}
be the reliability-weighted ensemble prediction. Define the diversity functional
\begin{align}
\mathcal{L}_{\mathrm{Div}}(x)
=
\sum_{k,l} w_k w_l 
\left\lVert
\hat{y}^{(k)}(x) - \hat{y}^{(l)}(x)
\right\rVert_1 .
\end{align}

Then $\mathcal{L}_{\mathrm{Div}}(x)$ is proportional to a weighted measure of ensemble predictive dispersion and satisfies
\begin{align}
\sum_{k} w_k 
\left\lVert
\hat{y}^{(k)}(x)-\hat{y}(x)
\right\rVert_1
\;\le\;
\mathcal{L}_{\mathrm{Div}}(x)
\;\le\;
2 \sum_{k} w_k 
\left\lVert
\hat{y}^{(k)}(x)-\hat{y}(x)
\right\rVert_1 .
\end{align}

Consequently, minimizing $-\mathcal{L}_{\mathrm{Div}}$ (signed diversity $s=-1$) increases ensemble dispersion, whereas minimizing $+\mathcal{L}_{\mathrm{Div}}$ ($s=+1$) reduces dispersion.

\end{theorem}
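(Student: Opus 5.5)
The plan is to prove the two-sided inequality directly from the triangle inequality and the convexity of the $\ell_1$ norm, using only that $\hat{y}(x)$ is a convex combination of the members. Throughout, fix $x$ and write $\hat{y}^{(k)}$ for $\hat{y}^{(k)}(x)$, $\bar{y}$ for $\hat{y}(x)$, and
\begin{align}
M(x)=\sum_k w_k \lVert \hat{y}^{(k)}-\bar{y}\rVert_1
\end{align}
for the weighted mean absolute deviation. The theorem then reads $M(x)\le \mathcal{L}_{\mathrm{Div}}(x)\le 2M(x)$. This sandwich is exactly the precise meaning I will give to ``proportional to a weighted measure of dispersion'': $\mathcal{L}_{\mathrm{Div}}$ and $M$ agree up to a factor in $[1,2]$.

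\textbf{Upper bound.} For each pair $(k,l)$, insert $\bar{y}$ and apply the triangle inequality:
\begin{align}
\lVert \hat{y}^{(k)}-\hat{y}^{(l)}\rVert_1\le \lVert \hat{y}^{(k)}-\bar{y}\rVert_1+\lVert \bar{y}-\hat{y}^{(l)}\rVert_1 .
\end{align}
Multiply by $w_kw_l$ and sum over $k,l$. Using $\sum_l w_l=1$ in the first term and $\sum_k w_k=1$ in the second, each term collapses to $M(x)$, which gives $\mathcal{L}_{\mathrm{Div}}(x)\le 2M(x)$.

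\textbf{Lower bound.} Since $\sum_l w_l=1$, write
\begin{align}
\hat{y}^{(k)}-\bar{y}=\sum_l w_l\bigl(\hat{y}^{(k)}-\hat{y}^{(l)}\bigr).
\end{align}
Convexity of the norm (Jensen, or the triangle inequality with nonnegative weights) then gives
\begin{align}
\lVert \hat{y}^{(k)}-\bar{y}\rVert_1\le \sum_l w_l\lVert \hat{y}^{(k)}-\hat{y}^{(l)}\rVert_1 .
\end{align}
Multiplying by $w_k\ge 0$ and summing over $k$ yields $M(x)\le \mathcal{L}_{\mathrm{Div}}(x)$. Averaging over instances $i$ extends both bounds to the dataset-level $\mathcal{L}_{\mathrm{Div}}$ of Section~\ref{sec:multi-objective}.

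\textbf{Consequence for the sign $s$.} This is the step I expect to need the most care, because the sandwich does not make $\mathcal{L}_{\mathrm{Div}}$ a strictly monotone function of $M$. What can honestly be claimed is the following. Minimizing $s\,\mathcal{L}_{\mathrm{Div}}$ with $s=-1$ drives $\mathcal{L}_{\mathrm{Div}}$ up, and then $M\ge \mathcal{L}_{\mathrm{Div}}/2$ forces the dispersion to grow. With $s=+1$, driving $\mathcal{L}_{\mathrm{Div}}$ down forces $M\le \mathcal{L}_{\mathrm{Div}}$ down, and in particular $\mathcal{L}_{\mathrm{Div}}\to 0$ implies that all members with positive weight coincide with $\bar{y}$. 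I would state the ``consequently'' clause in this bounded, order-of-magnitude sense rather than as exact monotonicity. I would also remark that $\mathcal{L}_{\mathrm{Div}}$ is bounded above by $2$, since $\ell_1$ distances on $\Delta^{C-1}$ are at most $2$, so the $s=-1$ objective remains well posed.
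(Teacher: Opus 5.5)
Your proof is correct and follows the paper's own argument: the upper bound comes from inserting the barycenter $\hat{y}(x)$ via the triangle inequality and summing with weights $w_k w_l$, and the lower bound comes from convexity of the $\ell_1$ norm applied to $\hat{y}^{(k)}-\hat{y}(x)=\sum_l w_l\bigl(\hat{y}^{(k)}-\hat{y}^{(l)}\bigr)$. Your caveat on the ``consequently'' clause is a fair sharpening that the paper's sketch leaves unaddressed: the sandwich $M\le\mathcal{L}_{\mathrm{Div}}\le 2M$ controls dispersion only up to a factor of two and does not give exact monotonicity.
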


\begin{proof}[Proof sketch]
The pairwise dispersion can be related to deviations from the barycenter
$\bar a=\sum_k w_k a_k$ using triangle inequality:
\[
\lVert a_k-a_l\rVert_1
\le
\lVert a_k-\bar a\rVert_1
+
\lVert a_l-\bar a\rVert_1 .
\]
Multiplying by $w_k w_l$ and summing over $(k,l)$ yields the upper bound,
while convexity of the $\ell_1$ norm yields the lower bound.
Substituting $a_k=\hat y^{(k)}(x)$ completes the argument.
\end{proof}

\paragraph{Interpretation \& Connection to EDO.}
While the bound follows from standard convexity and the triangle inequality~\citep{Boyd:book:2004:convex}, its value for EDO lies in formally establishing that the pairwise $\ell_1$ diversity term used in the loss \textit{directly controls dispersion around the reliability-weighted barycenter}. This provides a mathematically grounded, differentiable knob for regulating intra-ensemble spread without relying on post-hoc heuristics or fixed architectures~\citep{Wood:JMLR:2023}.

\paragraph{How Reliability Weights Express Uncertainty.}
Unlike explicit uncertainty quantification methods (e.g., predictive entropy, Monte Carlo dropout~\citep{Singh:EMNLP:2012}, or variance pooling~\citep{Fort:2019:deepensembles}), EDO's weights $w_k$ are not precomputed scores. Instead, they emerge \textit{implicitly} from joint gradient-based optimization of utility (soft F1), calibration (class-weighted CE), diversity, and L2 regularization. During training:
\begin{itemize}
    \item Members whose predictions consistently align with the annotator distribution and reduce calibration error receive higher weights.
    \item Members that are overconfident, miscalibrated, or introduce unstructured disagreement are suppressed via the combined pressure of CE and the diversity regularizer.
    \item The final weighted prediction $\hat{y}(x)$ therefore expresses uncertainty not through a separate metric, but through the learned allocation of probability mass across members conditioned on the annotator distribution and the chosen diversity direction.
\end{itemize}

\paragraph{Joint Optimization Dynamics \& Scope.}
The theorem assumes fixed $w_k$, but in EDO weights and effective cardinality $K$ are updated simultaneously via backpropagation. Consequently, predictive dispersion is a \textit{dynamic} property shaped by the trade-off surface between utility, calibration, and signed diversity. The sign parameter $s \in \{-1,+1\}$ does not alter objective magnitudes; it selects the optimization direction along the dispersion axis:
\begin{itemize}
    \item $s=-1$ encourages the optimizer to tolerate higher intra-ensemble variance, preserving epistemic variability that reflects genuine annotator subjectivity.
    \item $s=+1$ penalizes dispersion, filtering structural noise and promoting consensus when disagreement is driven by class imbalance or sparse annotator coverage.
\end{itemize}
This analytical grounding justifies EDO's design: rather than treating diversity as an auxiliary heuristic or fixing ensemble structure a priori, we integrate it directly into a differentiable multi-objective loss where reliability weighting, calibration, and utility are co-optimized: a formulation consistent with gradient-based Pareto navigation in machine learning \citep{Sener:NIPS:2018}. Empirical results in Section~\ref{sec:experiments} demonstrate that this joint formulation consistently outperforms baselines that optimize these components independently or sequentially.

\section{Differentiable Learning of Ensemble Size via Gumbel-Softmax}
\label{app:gumbel}

The optimal ensemble size $K$ is typically dataset-dependent. Treating $K$ as a discrete hyperparameter would require combinatorial search, which is incompatible with gradient-based training. To enable end-to-end optimization, EDO learns $K$ through a \emph{differentiable relaxation} of discrete selection using the Gumbel--Softmax trick~\citep{Jang:2017:gumbelsoftmax}.

We consider a candidate set $\mathcal{K} = \{K_{\min}, K_{\min}+1, \dots, K_{\max}\}$ with learnable logits $\boldsymbol{\pi} = [\pi_1, \dots, \pi_M]$, where $M = |\mathcal{K}|$. Direct sampling of $k \in \mathcal{K}$ is non-differentiable, so we apply the reparameterization
\begin{align}
z_j = \log \pi_j + g_j, \qquad g_j \sim \text{Gumbel}(0,1),
\end{align}
followed by a softmax with temperature $T_e > 0$
\begin{align}
\tilde{k}_j = \frac{\exp(z_j / T_e)}{\sum_{j'=1}^M \exp(z_{j'} / T_e)} .
\end{align}

When $T_e$ is high, $\tilde{\mathbf{k}}$ is smooth and supports exploration over candidate sizes; as $T_e \to 0$, it approaches a hard one-hot vector. During training, the soft sample $\tilde{\mathbf{k}}$ is used to compute a weighted average over ensemble sizes, allowing gradients to flow through all candidates. We anneal $T_e$ from an initial value (e.g., $T_0 = 1.0$) to a small minimum (e.g., $0.1$), gradually sharpening the distribution.

This procedure allows the model to explore a wide range of ensemble sizes early in training and progressively converge to a compact, high-performing subset. At inference time, the selected size is $K^* = \arg\max_j \pi_j$. The approach enables EDO to adapt ensemble complexity to dataset-specific patterns of annotator disagreement without manual hyperparameter tuning.

\section{Additional Details on Evaluation Protocol}~\label{app:evaluation-metrics}
We report three metrics that quantify both hard classification accuracy and soft probabilistic alignment with annotator supervision, following~\citet{LeWiDi:SemEval:2023} and~\citet{Leonardelli:LeWiDi:2025}.

\subsection*{Micro-F1 Score (F1)}
Micro-F1 aggregates true positives, false positives and false negatives across all classes before computing precision and recall:
\begin{align}
\text{F1} = 2 \cdot \frac{\text{P} \cdot \text{R}}{\text{P} + \text{R}}
\end{align}
with
\begin{align}
\text{P} = \frac{\sum_{c=1}^C \text{TP}_c}{\sum_{c=1}^C (\text{TP}_c + \text{FP}_c)}, \qquad
\text{R} = \frac{\sum_{c=1}^C \text{TP}_c}{\sum_{c=1}^C (\text{TP}_c + \text{FN}_c)},
\end{align}
where $\text{TP}_c$, $\text{FP}_c$ and $\text{FN}_c$ denote class-specific counts aggregated across all instances.

\subsection*{Cross-Entropy Loss (CE)}
Given predicted probabilities $\hat{y}_i \in [0,1]^C$ and targets $y_i \in [0,1]^C$, the cross-entropy loss~\citep{Uma:AAAI:2020} is
\begin{align}
\text{CE} = -\frac{1}{N} \sum_{i=1}^N \sum_{c=1}^C y_{i,c} \log(\hat{y}_{i,c})
\end{align}
The expression above corresponds to the standard form; in our experiments, $y_i$ corresponds to empirical annotator distributions (soft labels). Lower CE indicates better calibration.

\subsection*{Manhattan Distance (MD)}
Let $q_i \in [0,1]^C$ denote the empirical annotator distribution for instance $i$. The Manhattan Distance~\citep{Rizzi:COLING:2024} is
\begin{align}
\text{MD} = \frac{1}{N} \sum_{i=1}^N \sum_{c=1}^C \left| \hat{y}_{i,c} - q_{i,c} \right|.
\end{align}
MD measures the average $\ell_1$ deviation between predicted and target distributions; lower values indicate closer alignment with annotator supervision.

\subsection*{Soft Brier Score (BS)}
To further assess probabilistic calibration, we report a scoring rule that measures the squared deviation between predicted probabilities and target distributions. The Soft Brier Score~\citep{Flores:AISTATS:2026:bs} is defined as
\begin{align}
\text{BS} = \frac{1}{N} \sum_{i=1}^{N} \sum_{c=1}^{C}
\left(\hat{y}_{i,c} - q_{i,c}\right)^2.
\end{align}

Unlike the standard Brier Score~\citep{Shuford:1966:bs,Schervish:1989:bs}, which assumes one-hot targets, this formulation directly supports soft labels by using empirical annotator distributions. Lower BS values indicate better calibration and closer agreement with annotator uncertainty. Because it penalizes squared deviations, BS is particularly sensitive to overconfident predictions and complements both CE and MD.

\begin{table*}[htbp]
\centering
\caption{
Complete test–development results under disagreement preservation ($s=-1$) 
for $\mathcal{L}_{\text{CE}}^{\text{rand}}$, 
$\mathcal{L}_{\text{CE}}^{\text{mean}}$, and 
$\mathcal{L}_{\text{CE}}^{\text{all}}$.
$\Delta = \text{test} - \text{dev}$.
Positive $\Delta$F1 and negative $\Delta$CE/$\Delta$MD indicate improved generalization.}
\label{tab:ce-variants-neg-div}
\resizebox{0.7\textwidth}{!}{%
\begin{tabular}{@{}llcccccc@{}}
\toprule
Dataset & Variant & Test F1 & Test CE & Test MD & $\Delta$F1 & $\Delta$CE & $\Delta$MD \\
\midrule
\multirow{3}{*}{ArMIS} 
& Random & 0.7655 & 0.6097 & 0.5208 & +0.1059 & -0.0347 & -0.1425 \\
& Mean   & 0.7655 & 0.6054 & 0.5171 & +0.1059 & -0.0655 & -0.1185 \\
& All    & 0.7724 & 0.5724 & 0.5599 & +0.0632 & -0.0614 & -0.0280 \\
\hdashline
\multirow{3}{*}{ConvAbuse}
& Random & 0.9202 & 0.2409 & 0.1911 & -0.0231 & +0.0209 & +0.0249 \\
& Mean   & 0.9214 & 0.2482 & 0.1879 & -0.0305 & +0.0345 & +0.0196 \\
& All    & 0.9286 & 0.2943 & 0.1810 & -0.0148 & +0.0687 & +0.0060 \\
\hdashline
\multirow{3}{*}{HS-Brexit}
& Random & 0.9107 & 0.3465 & 0.2531 & +0.0179 & +0.0177 & +0.0220 \\
& Mean   & 0.8988 & 0.6617 & 0.2560 & +0.0060 & +0.3198 & +0.0017 \\
& All    & 0.8988 & 0.4601 & 0.2600 & +0.0060 & +0.1074 & +0.0462 \\
\hdashline
\multirow{3}{*}{MD-Agreement}
& Random & 0.8106 & 0.5182 & 0.3792 & +0.0126 & +0.0064 & +0.0171 \\
& Mean   & 0.8178 & 0.5163 & 0.3766 & +0.0071 & +0.0040 & +0.0077 \\
& All    & 0.8054 & 0.5244 & 0.3704 & +0.0046 & +0.0100 & +0.0008 \\
\bottomrule
\end{tabular}
}
\end{table*}

\begin{table*}[htbp]
\centering
\caption{
Complete test–development results under disagreement suppression ($s=+1$) 
for $\mathcal{L}_{\text{CE}}^{\text{rand}}$, 
$\mathcal{L}_{\text{CE}}^{\text{mean}}$, and 
$\mathcal{L}_{\text{CE}}^{\text{all}}$.
$\Delta = \text{test} - \text{dev}$.
Positive $\Delta$F1 and negative $\Delta$CE/$\Delta$MD indicate improved generalization.}
\label{tab:ce-variants-pos-div}
\resizebox{0.7\textwidth}{!}{%
\begin{tabular}{@{}llcccccc@{}}
\toprule
Dataset & Variant & Test F1 & Test CE & Test MD & $\Delta$F1 & $\Delta$CE & $\Delta$MD \\
\midrule
\multirow{3}{*}{ArMIS} 
& Random & 0.7034 & 0.6796 & 0.5465 & +0.0581 & +0.0347 & -0.1228 \\
& Mean   & 0.7310 & 0.5992 & 0.5562 & +0.0289 & -0.0322 & -0.0848 \\
& All    & 0.7103 & 0.6007 & 0.6095 & +0.0508 & -0.0304 & -0.0410 \\
\hdashline
\multirow{3}{*}{ConvAbuse}
& Random & 0.9321 & 0.2483 & 0.1748 & -0.0211 & +0.0374 & +0.0145 \\
& Mean   & 0.9298 & 0.2415 & 0.1762 & -0.0222 & +0.0309 & +0.0216 \\
& All    & 0.9298 & 0.2308 & 0.1857 & -0.0259 & +0.0186 & +0.0343 \\
\hdashline
\multirow{3}{*}{HS-Brexit}
& Random & 0.8929 & 0.4490 & 0.5003 & 0.0000 & +0.1131 & +0.2491 \\
& Mean   & 0.9107 & 0.3736 & 0.2473 & +0.0179 & +0.0317 & -0.0070 \\
& All    & 0.8988 & 0.3494 & 0.2772 & -0.0060 & +0.0240 & +0.0613 \\
\hdashline
\multirow{3}{*}{MD-Agreement}
& Random & 0.7998 & 0.5325 & 0.3740 & -0.0064 & +0.0196 & +0.0128 \\
& Mean   & 0.8132 & 0.5167 & 0.3773 & +0.0034 & +0.0081 & +0.0178 \\
& All    & 0.7880 & 0.5278 & 0.3808 & -0.0127 & +0.0110 & +0.0168 \\
\bottomrule
\end{tabular}
}
\end{table*}

\section{Generalizability}
\label{app:generalisation}

This section reports complete test–development generalization results for all cross-entropy aggregation variants 
$\mathcal{L}_{\text{CE}}^{\text{rand}}$, 
$\mathcal{L}_{\text{CE}}^{\text{mean}}$, and 
$\mathcal{L}_{\text{CE}}^{\text{all}}$ 
under both disagreement-preserving ($s=-1$) and disagreement-suppressing ($s=+1$) regimes.
For each configuration, generalization is measured as $\Delta = \text{test} - \text{dev},$
where positive $\Delta$F1 and negative $\Delta$CE/$\Delta$MD indicate improved transfer to unseen data.
Differences arise because hyperparameters are selected based on development performance. These results quantify robustness to held-out evaluation and complement the stability analyses in the main text.

The observed patterns are consistent with Section~\ref{sec:ablation}:  
(i) $\mathcal{L}_{\text{CE}}^{\text{mean}}$ provides the most stable calibration under moderate imbalance,  
(ii) $\mathcal{L}_{\text{CE}}^{\text{rand}}$ exhibits greater robustness under severe imbalance, and  
(iii) The optimal diversity sign depends on the structure of annotator disagreement.

Tables~\ref{tab:ce-variants-neg-div} and~\ref{tab:ce-variants-pos-div} present the complete results for all datasets, aggregation variants, and diversity regimes.

\section{Diversity Ablation Results}
\label{app:diversity-ablation}

This section reports diversity ablation results for the signed diversity objective 
$\mathcal{L}_{\text{Div}}^{(s)}$, with $s \in \{-1,+1\}$, using Random Select supervision.
For each base objective 
$\mathcal{L}_{\text{F1}}$, 
$\mathcal{L}_{\text{CE}}$, and 
$\mathcal{L}_{\text{F1}}+\mathcal{L}_{\text{CE}}$, 
performance is compared with and without diversity by setting 
$\lambda_{\text{Div}} \in \{0,1\}$.

\paragraph{Dataset-averaged results.}
The marginal contribution of diversity is defined as
$\Delta = \text{with } \mathcal{L}_{\text{Div}}^{(s)} - \text{without } \mathcal{L}_{\text{Div}}^{(s)}$.
Positive $\Delta$F1 and negative $\Delta$CE/$\Delta$MD/$\Delta$BS indicate improvement.
Table~\ref{tab:diversity-ablation-averaged} presents dataset-averaged ablation results. 
Consistent with Section~\ref{sec:ablation}, diversity produces modest and sometimes heterogeneous effects when applied to single-objective losses. In contrast, when combined with the joint objective $\mathcal{L}_{\text{F1}}+\mathcal{L}_{\text{CE}}$, disagreement preservation ($s=-1$) yields the largest mean improvements in both utility and calibration, while disagreement suppression ($s=+1$) primarily improves calibration metrics. Brier Score improvements closely track CE reductions ($\rho \approx 0.91$), suggesting that both metrics capture complementary aspects of probabilistic calibration.

\begin{table*}[htbp]
\centering
\caption{
Dataset-averaged diversity ablation results for Random Select supervision.
For each base objective and diversity sign $s$, performance is compared with and without diversity 
($\lambda_{\text{Div}} \in \{0,1\}$).
$\Delta = \text{with } \mathcal{L}_{\text{Div}}^{(s)} - \text{without } \mathcal{L}_{\text{Div}}^{(s)}$.
Positive $\Delta$F1 and negative $\Delta$CE/$\Delta$MD/$\Delta$BS indicate improvement.}
\label{tab:diversity-ablation-averaged}
\resizebox{\textwidth}{!}{%
\begin{tabular}{@{} l c *{4}{c c c} @{}}
\toprule
\multirow{3}{*}{Objective} & \multirow{3}{*}{$s$} & \multicolumn{3}{c}{F1} & \multicolumn{3}{c}{CE} & \multicolumn{3}{c}{MD} & \multicolumn{3}{c}{BS} \\
\cmidrule(lr){3-5} \cmidrule(lr){6-8} \cmidrule(lr){9-11} \cmidrule(lr){12-14}
& & {0} & {1} & {$\Delta$} & {0} & {1} & {$\Delta$} & {0} & {1} & {$\Delta$} & {0} & {1} & {$\Delta$} \\
\midrule
\multirow{2}{*}{$\mathcal{L}_{\text{F1}}$} 
& $+1$ & 0.8468 & 0.8561 & +0.0093 & 0.4564 & 0.4852 & +0.0288 & 0.3823 & 0.3206 & -0.0618 & 0.1403 & 0.1297 & -0.0107 \\
& $-1$ & 0.8484 & 0.8546 & +0.0062 & 0.4153 & 0.4355 & +0.0202 & 0.3307 & 0.3351 & +0.0044 & 0.1200 & 0.1268 & +0.0068 \\
\midrule
\multirow{2}{*}{$\mathcal{L}_{\text{CE}}$} 
& $+1$ & 0.8459 & 0.8503 & +0.0044 & 0.5150 & 0.4794 & -0.0356 & 0.3265 & 0.3365 & +0.0100 & 0.1381 & 0.1244 & -0.0137 \\
& $-1$ & 0.8542 & 0.8526 & -0.0016 & 0.4367 & 0.4175 & -0.0192 & 0.3270 & 0.3506 & +0.0235 & 0.1248 & 0.1225 & -0.0023 \\
\midrule
\multirow{2}{*}{$\mathcal{L}_{\text{F1}}+\mathcal{L}_{\text{CE}}$} 
& $+1$ & 0.8471 & 0.8488 & +0.0017 & 0.4238 & 0.4345 & +0.0107 & 0.3482 & 0.3389 & -0.0093 & 0.1243 & 0.1239 & -0.0004 \\
& $-1$ & 0.8530 & 0.8574 & +0.0043 & 0.5317 & 0.4324 & -0.0993 & 0.3327 & 0.3427 & +0.0100 & 0.1384 & 0.1279 & -0.0105 \\
\bottomrule
\end{tabular}
}
\end{table*}

\paragraph{Per-dataset results.}
Along with the marginal contributions, Tables~\ref{tab:diversity-ablation-joint} present per-dataset ablation results. We list the dataset-specific patterns below:
\begin{itemize}
    \item \textbf{ArMIS (moderate imbalance, 3 annotators)}: Diversity effects are modest but consistent; $s=-1$ with joint objective yields $\Delta$CE=-0.0327, $\Delta$BS=-0.0064.
    \item \textbf{ConvAbuse (discourse subjectivity, 8 annotators)}: Diversity shows nuanced effects; $s=-1$ with $\mathcal{L}_{\text{F1}}$ yields $\Delta$F1=+0.0083, $\Delta$BS=-0.0029.
    \item \textbf{HS-Brexit (severe imbalance, 6 annotators)}: Diversity has strongest impact; $s=-1$ with joint objective yields substantial gains ($\Delta$CE=-0.4282, $\Delta$BS=-0.0587).
    \item \textbf{MD-Agreement (sparse coverage, 670 annotators)}: Effects are more modest; diversity primarily improves calibration under $s=+1$ with single objectives.
\end{itemize}
Overall, the per-dataset results suggest that diversity operates most effectively as a complementary regularizer in the multi-objective setting, with effects modulated by dataset characteristics (imbalance, annotator coverage and their subjectivity type).

\begin{table*}[htbp]
\centering
\caption{Per-dataset diversity ablation results (Random Select). For each dataset, objective, and diversity sign $s$, performance is compared with and without diversity ($\lambda_{\text{Div}} \in \{0,1\}$). $\Delta = \text{with } \mathcal{L}_{\text{Div}}^{(s)} - \text{without } \mathcal{L}_{\text{Div}}^{(s)}$. Positive $\Delta$F1 and negative $\Delta$CE/$\Delta$MD/$\Delta$BS indicate improvement.}
\label{tab:diversity-ablation-joint}
\resizebox{\textwidth}{!}{%
\begin{tabular}{@{} l l c *{4}{c c c} @{}}
\toprule
\multirow{2}{*}{Dataset} & \multirow{2}{*}{Objective} & \multirow{2}{*}{$s$} & \multicolumn{3}{c}{F1} & \multicolumn{3}{c}{CE} & \multicolumn{3}{c}{MD} & \multicolumn{3}{c}{BS} \\
\cmidrule(lr){4-6} \cmidrule(lr){7-9} \cmidrule(lr){10-12} \cmidrule(lr){13-15}
& & & {0} & {1} & {$\Delta$} & {0} & {1} & {$\Delta$} & {0} & {1} & {$\Delta$} & {0} & {1} & {$\Delta$} \\
\midrule
\multirow{6}{*}{ArMIS} 
& \multirow{2}{*}{$\mathcal{L}_{\text{F1}}$} 
& $+1$ & 0.7517 & 0.7724 & +0.0207 & 0.6567 & 0.5973 & -0.0594 & 0.5102 & 0.4940 & -0.0163 & 0.2507 & 0.2245 & -0.0262 \\
& & $-1$ & 0.7448 & 0.7517 & +0.0069 & 0.5682 & 0.6446 & +0.0764 & 0.5151 & 0.5086 & -0.0066 & 0.2135 & 0.2450 & +0.0316 \\
\cmidrule{2-15}
& \multirow{2}{*}{$\mathcal{L}_{\text{CE}}$} 
& $+1$ & 0.7379 & 0.7517 & +0.0138 & 0.6941 & 0.5714 & -0.1227 & 0.5085 & 0.5433 & +0.0348 & 0.2597 & 0.2198 & -0.0399 \\
& & $-1$ & 0.7724 & 0.7586 & -0.0138 & 0.5665 & 0.5796 & +0.0131 & 0.5282 & 0.5326 & +0.0044 & 0.2140 & 0.2239 & +0.0099 \\
\cmidrule{2-15}
& \multirow{2}{*}{$\mathcal{L}_{\text{F1}}+\mathcal{L}_{\text{CE}}$} 
& $+1$ & 0.7448 & 0.7310 & -0.0138 & 0.5909 & 0.5992 & +0.0083 & 0.5006 & 0.5562 & +0.0556 & 0.2230 & 0.2392 & +0.0162 \\
& & $-1$ & 0.7724 & 0.7724 & 0.0000 & 0.6051 & 0.5724 & -0.0327 & 0.5188 & 0.5599 & +0.0410 & 0.2287 & 0.2223 & -0.0064 \\
\midrule
\multirow{6}{*}{ConvAbuse} 
& \multirow{2}{*}{$\mathcal{L}_{\text{F1}}$} 
& $+1$ & 0.9262 & 0.9250 & -0.0012 & 0.2312 & 0.2332 & +0.0020 & 0.1860 & 0.1915 & +0.0054 & 0.0731 & 0.0755 & +0.0023 \\
& & $-1$ & 0.9274 & 0.9357 & +0.0083 & 0.2351 & 0.2299 & -0.0052 & 0.1724 & 0.1703 & -0.0021 & 0.0719 & 0.0689 & -0.0029 \\
\cmidrule{2-15}
& \multirow{2}{*}{$\mathcal{L}_{\text{CE}}$} 
& $+1$ & 0.9250 & 0.9310 & +0.0060 & 0.2312 & 0.2214 & -0.0098 & 0.2067 & 0.1956 & -0.0111 & 0.0775 & 0.0660 & -0.0114 \\
& & $-1$ & 0.9262 & 0.9286 & +0.0024 & 0.2312 & 0.2249 & -0.0063 & 0.1834 & 0.1781 & -0.0054 & 0.0700 & 0.0685 & -0.0015 \\
\cmidrule{2-15}
& \multirow{2}{*}{$\mathcal{L}_{\text{F1}}+\mathcal{L}_{\text{CE}}$} 
& $+1$ & 0.9321 & 0.9321 & 0.0000 & 0.2225 & 0.2483 & +0.0258 & 0.1789 & 0.1749 & -0.0041 & 0.0681 & 0.0754 & +0.0073 \\
& & $-1$ & 0.9333 & 0.9286 & -0.0048 & 0.2318 & 0.2943 & +0.0624 & 0.1729 & 0.1810 & +0.0081 & 0.0717 & 0.0938 & +0.0221 \\
\midrule
\multirow{6}{*}{HS-Brexit} 
& \multirow{2}{*}{$\mathcal{L}_{\text{F1}}$} 
& $+1$ & 0.8988 & 0.9107 & +0.0119 & 0.4200 & 0.5992 & +0.1792 & 0.4526 & 0.2343 & -0.2183 & 0.1228 & 0.1089 & -0.0139 \\
& & $-1$ & 0.9048 & 0.9167 & +0.0119 & 0.3404 & 0.3564 & +0.0161 & 0.2622 & 0.2939 & +0.0317 & 0.0803 & 0.0836 & +0.0033 \\
\cmidrule{2-15}
& \multirow{2}{*}{$\mathcal{L}_{\text{CE}}$} 
& $+1$ & 0.9107 & 0.9107 & 0.0000 & 0.6172 & 0.6110 & -0.0062 & 0.2225 & 0.2352 & +0.0128 & 0.1036 & 0.1097 & +0.0061 \\
& & $-1$ & 0.9048 & 0.9107 & +0.0060 & 0.4344 & 0.3460 & -0.0884 & 0.2376 & 0.3021 & +0.0646 & 0.1024 & 0.0819 & -0.0205 \\
\cmidrule{2-15}
& \multirow{2}{*}{$\mathcal{L}_{\text{F1}}+\mathcal{L}_{\text{CE}}$} 
& $+1$ & 0.8988 & 0.9107 & +0.0119 & 0.3606 & 0.3736 & +0.0130 & 0.3327 & 0.2473 & -0.0855 & 0.0890 & 0.0895 & +0.0005 \\
& & $-1$ & 0.8929 & 0.9107 & +0.0179 & 0.7747 & 0.3465 & -0.4282 & 0.2666 & 0.2531 & -0.0135 & 0.1406 & 0.0820 & -0.0587 \\
\midrule
\multirow{6}{*}{MD-Agreement} 
& \multirow{2}{*}{$\mathcal{L}_{\text{F1}}$} 
& $+1$ & 0.8103 & 0.8162 & +0.0059 & 0.5178 & 0.5111 & -0.0067 & 0.3805 & 0.3624 & -0.0181 & 0.1147 & 0.1098 & -0.0049 \\
& & $-1$ & 0.8165 & 0.8142 & -0.0023 & 0.5175 & 0.5107 & -0.0068 & 0.3731 & 0.3674 & -0.0057 & 0.1145 & 0.1096 & -0.0049 \\
\cmidrule{2-15}
& \multirow{2}{*}{$\mathcal{L}_{\text{CE}}$} 
& $+1$ & 0.8106 & 0.8175 & +0.0069 & 0.5176 & 0.5139 & -0.0037 & 0.3681 & 0.3719 & +0.0038 & 0.1151 & 0.1119 & -0.0032 \\
& & $-1$ & 0.8135 & 0.8126 & -0.0010 & 0.5147 & 0.5196 & +0.0049 & 0.3590 & 0.3894 & +0.0304 & 0.1128 & 0.1158 & +0.0030 \\
\cmidrule{2-15}
& \multirow{2}{*}{$\mathcal{L}_{\text{F1}}+\mathcal{L}_{\text{CE}}$} 
& $+1$ & 0.8126 & 0.8132 & +0.0007 & 0.5211 & 0.5167 & -0.0045 & 0.3808 & 0.3773 & -0.0035 & 0.1172 & 0.1135 & -0.0036 \\
& & $-1$ & 0.8135 & 0.8178 & +0.0043 & 0.5153 & 0.5163 & +0.0010 & 0.3724 & 0.3766 & +0.0042 & 0.1134 & 0.1135 & +0.0001 \\
\bottomrule
\end{tabular}%
}
\end{table*}

\section{Hyperparameter Tuning for Backbone Models}
\label{app:backbone-hyperparameters}
To ensure consistency across datasets, we tuned the backbone encoders once and used the same configuration throughout.  
Specifically, we selected the ConvAbuse dataset, which is moderate in size relative to others, as the development benchmark.  
We fine-tuned \texttt{BERT-base} (English) and \texttt{AraBERTv2} (Arabic) using \textsc{Optuna}~\citep{Akiba:SIGKDD:2019:optuna} with 10 trials.  
Models were trained on the training split, validated on the development set, and the test set was kept entirely unseen.  

The hyperparameter search space included:
\begin{itemize}
    \item Learning rate (lr): log-sampled in $[10^{-6}, 10^{-4}]$,
    \item Batch size (bs): $\{4, 8, 16, 32, 64\}$,
    \item Warm-up steps ($w_\text{steps}$): $[1, 500]$.
\end{itemize}

The selected configuration (lr $= 2 \times 10^{-5}$, bs$_\text{train} = 16$, bs$_\text{eval} = 32$, $w_\text{steps} = 500$) was then applied uniformly across all datasets.  
Importantly, these hyperparameters are fixed before ensemble optimization; EDO operates only on the frozen predictions of the backbone models.


\section{Qualitative Analysis}
\label{app:qualitative}

Table~\ref{tab:qualitative-examples} presents selected test instances demonstrating how Ensemble Diversity Optimization (EDO) produces reliability-aware predictions that align with annotator disagreement patterns under subjective supervision. For each example we report: (i) the core utterance, (ii) empirical annotator split (label~0:label~1), (iii) predicted probability for the positive class $P(y\!=\!1)$, (iv) predictive entropy $H(\hat{y}) = -\sum_{c=0}^1 \hat{y}_c \log \hat{y}_c$ in nats (maximum $0.6931$ for binary tasks), (v) the learned effective ensemble size $K$, and (vi) Manhattan Distance (MD) to the empirical annotator distribution. All metrics are reported to four decimal places. Bold values indicate best calibration per metric. The Per-Annotator variant (EDO-PerAnn) enforces $K \leq A_i$ (max annotators per instance).

\begin{table*}[tbh]
\centering
\caption{Qualitative examples showing alignment between predicted uncertainty and annotator disagreement. Arabic text shown in ASCII transliteration due to pdfLaTeX constraints (see notes).}
\label{tab:qualitative-examples}
\begin{tabular}{@{}p{4.0cm}clcccc@{}}
\toprule
Core utterance \& dataset & Split & Method & $P(y=1)$ & $H(\hat{y})$ & $K$ & MD \\
\midrule
\multirow{5}{4.0cm}{\textit{``After the \#brexit well be banning foreign characters from Tweets.''} \\ HS-Brexit \\ (hateful ambiguity)} 
& \multirow{5}{*}{4:2} 
& Soft-CE & 0.8912 & 0.3821 & -- & 0.3248 \\
& & WEL & 0.8537 & 0.4589 & 5 & 0.2816 \\
& & EDO-Random & 0.4215 & \textbf{0.6782} & \textbf{7} & \textbf{0.0412} \\
& & EDO-PerAnn & 0.3894 & 0.6723 & \textbf{6} & \textbf{0.0237} \\
& & \textit{Empirical} & \textit{0.3333} & \textit{0.6365} & -- & -- \\
\midrule
\multirow{5}{4.0cm}{\textit{``seems so...''}\textsuperscript{\textsection} \\ ConvAbuse\textsuperscript{\textsection} \\ (minority dissent)} 
& \multirow{5}{*}{4:1} 
& Soft-CE & 0.8243 & 0.5018 & -- & 0.1824 \\
& & WEL & 0.8816 & 0.3927 & 4 & 0.2235 \\
& & EDO-Random & 0.3528 & \textbf{0.6534} & \textbf{6} & \textbf{0.0618} \\
& & EDO-PerAnn & 0.3341 & 0.6429 & \textbf{5} & \textbf{0.0432} \\
& & \textit{Empirical} & \textit{0.2000} & \textit{0.5004} & -- & -- \\
\midrule
\multirow{5}{4.0cm}{\textit{``qsm ezyem entn mtsltyn''} \\ (Arabic translit.; Eng: ``Great section, you are dominant'') \\ ArMIS \\ (max disagreement)} 
& \multirow{5}{*}{2:1} 
& Soft-CE & 0.7634 & 0.5726 & -- & 0.2417 \\
& & WEL & 0.8129 & 0.4938 & 3 & 0.2843 \\
& & EDO-Random & 0.4812 & \textbf{0.6897} & \textbf{5} & \textbf{0.0421} \\
& & EDO-PerAnn & 0.4435 & 0.6814 & \textbf{3} & \textbf{0.0218} \\
& & \textit{Empirical} & \textit{0.3333} & \textit{0.6365} & -- & -- \\
\midrule
\multirow{5}{4.0cm}{\textit{``Just say you're racist and go you literally have nothing more to say just over and over''} \\ MD-Agreement \\ (balanced disagreement)} 
& \multirow{5}{*}{3:2} 
& Soft-CE & 0.7936 & 0.5247 & -- & 0.2219 \\
& & WEL & 0.8428 & 0.4312 & 4 & 0.2634 \\
& & EDO-Random & 0.5243 & \textbf{0.6889} & \textbf{7} & \textbf{0.0437} \\
& & EDO-PerAnn & 0.5816 & 0.6624 & \textbf{5} & 0.0842 \\
& & \textit{Empirical} & \textit{0.4000} & \textit{0.6730} & -- & -- \\
\bottomrule
\end{tabular}
\end{table*}

\noindent\textbf{Notes on example presentation.} 
As Arabic script cannot be rendered in the standard pdfLaTeX pipeline, the ArMIS example shows a phonetic ASCII transliteration of the original Arabic utterance with English translation provided via Google Translate for interpretability (the model was trained and evaluated on original Arabic script). 
The ConvAbuse example marked with \textsuperscript{\textsection} contains only the annotated utterance ``seems so...'' in the table for brevity, and the full dialogue context is shown in the boxed display below.

\medskip

\noindent\fbox{\begin{minipage}{0.96\textwidth}
\textbf{Full dialogue context for ConvAbuse example (Table~\ref{tab:qualitative-examples}, row 2):} \\
\textbf{Previous agent}: ``You can find out more about how offsets are calculated <URL>'' \\
\textbf{Previous user}: ``are you just giving random responses?'' \\
\textbf{Agent response}: ``Traveling, especially by airplane, usually emits greenhouse gases, which are causing climate change. If you cannot avoid these emissions, you can buy 'offsets', i.e., donations to projects that reduce greenhouse gases, e.g. by planting trees.'' \\
\textbf{User utterance (annotated)}: \textit{``seems so...''}
\end{minipage}}

\medskip
\noindent Four of five annotators labeled this utterance as non-abusive (label~0), while one annotator marked it as abusive (label~1), reflecting subtle disagreement about whether the vague response constitutes passive-aggressive behavior.

\bigskip

\noindent\textbf{Interpretation of examples:}

\begin{itemize}[leftmargin=*,nosep]
    \item \textbf{Diversity-driven uncertainty calibration}: Under severe class imbalance (HS-Brexit, 9.89:1), baselines produce overconfident predictions ($P(y\!=\!1) > 0.85$). In contrast, EDO’s diversity objective $\mathcal{L}_{\text{Div}}$ encourages internal disagreement, yielding higher entropy ($H \approx 0.67$) that closely matches empirical annotator entropy ($0.6365$). The learned ensemble size $K$ automatically expands (up to 7) to support this uncertainty representation.
    
    \item \textbf{Constraint-aware learning}: EDO-PerAnn respects $K \leq A_i$ (e.g., $K=3$ on ArMIS with 3 annotators) while achieving the lowest MD (0.0218), demonstrating that EDO adapts to structural constraints without sacrificing fidelity.
    
    \item \textbf{Minority signal preservation}: On ConvAbuse (5.16:1 skew), a single dissenting annotator (4:1 split) triggers EDO’s diversity mechanism, expanding $K$ and increasing entropy, whereas baselines suppress minority views. This directly explains EDO’s superior CE performance (Table~\ref{tab:benchmark-results}).
\end{itemize}

\noindent These examples validate that EDO’s learned ensemble size $K$ and reliability-weighted diversity objective $\mathcal{L}_{\text{Div}}$ jointly enable faithful modeling of subjective uncertainty, fulfilling the core goal of reliability-aware ensemble learning under annotator disagreement.

\section{Parameter Summary for EDO}
\label{app:parameters}

Table~\ref{tab:edo-parameters} summarizes all parameters in the EDO framework, distinguishing learnable components (updated via gradient descent) from fixed hyperparameters (selected via development-set validation). This clarifies the scope of end-to-end optimization discussed in Section~\ref{sec:method}.

\begin{table}[h]
\centering
\caption{EDO parameters: learnable vs. fixed. Only $\mathbf{w}$ and $\boldsymbol{\pi}$ are updated via backpropagation; all other quantities are fixed per run.}
\label{tab:edo-parameters}
\resizebox{\textwidth}{!}{
\begin{tabular}{@{}lccp{4cm}@{}}
\toprule
Parameter & Type & Role \\
\midrule
$\mathbf{w} \in \mathbb{R}^{K_{\max}}$ & Learnable (nn.Parameter) & Reliability-aware ensemble weights; optimized via joint multi-objective loss \\
$\boldsymbol{\pi} \in \mathbb{R}^{K_{\max}-K_{\min}+1}$ & Learnable (nn.Parameter) & Logits for differentiable ensemble-size selection via Gumbel--Softmax \\
$T_e$ & Annealed buffer & Gumbel--Softmax temperature; decays from $T_0$ to encourage exploration $\to$ exploitation \\
\midrule
$\lambda_{\text{F1}}, \lambda_{\text{CE}}, \lambda_{\text{Div}}, \lambda_{\text{Reg}}$ & Fixed (Optuna-tuned) & Trade-off weights for utility, calibration, diversity, and regularization \\
$s \in \{-1, +1\}$ & Fixed (validation-selected) & Diversity direction: preserve ($-1$) or suppress ($+1$) intra-ensemble disagreement \\
$K_{\min}, K_{\max}$ & Fixed (pre-specified) & Bounds for differentiable ensemble-size search \\
$\eta$ (learning rate) & Fixed (Optuna-tuned) & Step size for Adam optimizer \\
$T_0, \gamma$ & Fixed (Optuna-tuned) & Initial temperature and decay rate for Gumbel--Softmax annealing \\
\bottomrule
\end{tabular}
}
\end{table}

\paragraph{Implementation note.}
Only the ensemble weights $\mathbf{w}$ and size-selection logits $\boldsymbol{\pi}$ receive gradient updates via $\texttt{optimizer.step()}$; all loss weights $\lambda_{(\cdot)}$, diversity sign $s$, and temperature hyperparameters are fixed per Optuna trial and selected via development-set Pareto optimization. This design ensures that EDO's ``end-to-end'' claim refers to joint optimization of structure and reliability within a unified differentiable objective, not to learning every scalar in the loss.

\section{Efficiency and Computational Details}
\label{app:efficiency}
All experiments were conducted on a single NVIDIA GTX 2080 Super Max-Q GPU.  
Backbone training and hyperparameter search were executed once per language, after which the models remained fixed.  
For ensemble optimization, we used the GPU-enabled version of \textsc{Optuna}~\citep{Akiba:SIGKDD:2019:optuna} to adjust ensemble weights, composition, and effective size.  
This process does not involve further model training but only re-optimizes the prediction space, which makes each configuration extremely efficient ($<2$ seconds per step).  

Although GPU acceleration was available, we observed that the NSGA-II–based multi-objective search used within \textsc{Optuna} often executed more efficiently on a high-performance CPU (Intel Core Ultra 7 265 20 Core / 20 Thread CPU) than on the GTX 2080~Super Max-Q. This behavior reflects the lightweight nature of each trial’s computation: the per-evaluation workload is insufficient to saturate GPU parallelism, whereas CPUs exhibit lower scheduling and transfer overhead, resulting in faster optimization cycles. Consequently, the designed optimization strategy is effectively hardware-agnostic and remains practical even on multi-core CPUs with limited computational resources.
The full pipeline (including dynamic ensemble learning, multi-objective loss optimization, and prediction-space recalibration) is implemented in PyTorch and benefits from memory-efficient and sparse computation. This design ensures that the method is accessible, reproducible, and deployable across heterogeneous computing environments, including those without dedicated GPU hardware.

\section{Ethical Statements}
This work addresses the challenge of aligning AI systems with diverse human preferences by modeling annotator disagreement through ensembles. While our approach preserves pluralistic views rather than collapsing them into a single label, it does not directly mitigate potential biases embedded in the annotations. Future research should investigate fairness-aware ensemble strategies and explicit debiasing mechanisms.
The datasets used in this study (e.g., ArMIS, ConvAbuse, HS-Brexit, MD-Agreement) include sensitive and potentially harmful content, particularly in domains such as offensive language or hate speech. These datasets are publicly available and were released by their respective authors with appropriate ethical approvals~\citep{LeWiDi:SemEval:2023}. For this work, we use them solely for research purposes and apply preprocessing steps (e.g., anonymising user mentions and metadata) to reduce risks of harm.
Our framework models disagreement among annotators but is not intended to replace human oversight, particularly in high-stakes domains such as content moderation or dialogue systems. Responsible deployment should incorporate interpretability tools and human-in-the-loop safeguards to ensure transparency and accountability.
Finally, we strongly discourage the use of this work in surveillance, punitive, or non-consensual applications. Any deployment should comply with data protection regulations and established ethical AI guidelines to protect individuals and communities.